\documentclass[11pt,letterpaper]{article}

\usepackage[margin=1in]{geometry}
\usepackage[T1]{fontenc}
\usepackage{newtxtext}
\usepackage{helvet}
\usepackage{courier}
\usepackage[hyphens]{url}
\usepackage{graphicx}
\usepackage{natbib}
\usepackage{caption}
\usepackage{algorithm}

\usepackage{newfloat}
\usepackage{listings}
\DeclareCaptionStyle{ruled}{labelfont=normalfont,labelsep=colon,strut=off}
\floatstyle{ruled}
\newfloat{listing}{tb}{lst}{}
\floatname{listing}{Listing}

\usepackage{booktabs}

\usepackage{amsmath,amsfonts}
\usepackage{amsthm}
\usepackage[inline]{enumitem}

\usepackage{amsmath,amsfonts,bm}

\def\eqref#1{equation~\ref{#1}}

\def\1{\bm{1}}

\DeclareMathAlphabet{\mathsfit}{\encodingdefault}{\sfdefault}{m}{sl}
\SetMathAlphabet{\mathsfit}{bold}{\encodingdefault}{\sfdefault}{bx}{n}

\DeclareMathOperator*{\argmin}{arg\,min}

\usepackage{hyperref}
\usepackage{url}
\usepackage{amsmath,amssymb,amsthm}
\usepackage{booktabs}
\usepackage{algorithm}
\usepackage{algpseudocode}
\usepackage{graphicx}
\usepackage{wrapfig}
\usepackage{subcaption}
\usepackage{enumitem}
\usepackage[table]{xcolor}
\usepackage{booktabs}
\usepackage{multirow}

\newtheorem{definition}{Definition}

\newtheorem{proposition}{Proposition}

\newtheorem{remark}{Remark}

\newcommand{\Reals}{\mathbb{R}}

\title{Representation Alignment from a Few Paired Data via Distilled Optimal Transport}

\author{
  Julien Kordes, Reuben Dorent\\[6pt]
  \parbox{0.85\textwidth}{\centering\small
  Sorbonne Universit\'e, Institut du Cerveau - Paris Brain Institute - ICM, CNRS, Inria, Inserm, AP-HP, H\^opital de la Piti\'e Salp\^etri\`ere, 75013 Paris, France\\
  \texttt{reuben.dorent@inria.fr}}
}
\date{}

\begin{document}

\maketitle

\begin{abstract}
Aligning two independently pretrained unimodal encoders typically demands millions of paired examples. In this paper, we study the regime where only $N \sim 100$--$500$ pairs are available, while unpaired data on either side is abundant. Our method expresses every representation by its similarities to the $N$ paired anchors, which places both modalities in a common coordinate system; matches the two point clouds with a single entropic optimal transport problem solved over the entire pool, whose marginal constraints counteract the hubness that defeats nearest-neighbor matching; and distills the resulting plan into two lightweight encoders, so that the correspondence becomes inductive. We additionally put this marginal constraint in relation with the one-step hub corrections of the bilingual lexicon induction literature.
Pairing is required only for the anchors: the unpaired collections may be drawn independently from the two marginals, so that no item outside the anchor set needs to have both of its modalities observed. We evaluate on image-text and molecule-text alignment, where our method outperforms existing approaches at the same anchor budget on Flickr30k, MS-COCO and ChEBI-20. 
\end{abstract}

\section{Introduction}

Vision-language models such as CLIP \citep{radford2021clip} learn a shared cross-modal space from hundreds of millions of paired examples. However, such paired multimodal data is scarce, requiring costly manual annotation or specialized paired measurements. In contrast, unimodal data can be abundant, and strong unimodal encoders are readily available off the shelf \citep{caron2021dino, reimers2019sbert, ross2022molformer}. Leveraging these encoders for multimodal tasks requires aligning their embedding space, a fundamental challenge: trained independently, on different objectives and data distributions, their spaces differ in dimension, scale, and geometry, with no known mapping between them.  In this work, we investigate the extreme low-resource regime: aligning two pre-trained unimodal encoders using only a few hundred paired examples ($N \sim 100\text{--}500$).

Prior approaches to low-resource cross-modal alignment fall into three categories.
1) Anchor-relative methods express each representation as its vector of similarities to a set of \emph{anchors}, i.e. paired multimodal representations, casting both modalities into a shared coordinate system. 
\citet{moschella2022relative} introduced this approach for a single modality, and ASIF \citep{norelli2023asif} extended across modalities. However, these non-parametric methods require very large anchor collections, e.g. $1.6\times10^6$ anchors for ASIF. Moreover, inference is costly since the large anchor set must be stored to compute the anchor-relative coordinates.
2)~Graph-based methods such as SUE \citep{yacobi2025sue} embed each modality independently through the spectrum of its neighborhood graph, whose edges are weighted by intra-modal distances, and use the anchors only to provide the initial linear alignment. While SUE operates at a low anchor budget ($N \approx 500$), it assumes that independently trained unimodal encoders converge toward latent spaces with comparable local geometry, a local form of the Platonic Representation Hypothesis \citep{huh2024platonic}. This assumption is strong: cross-modal representations agree locally on which points are close, but not on local distances  \citep{groger2026aristotelian}. 
3) Recently, 
SOTAlign \citep{roschmann2026sotalign} also performs a linear alignment using the anchors and uses it as a teacher, regularizing a student on unpaired batches with an optimal-transport divergence derived from the linear alignment. Since this regularization enforces consistency with the teacher, SOTAlign inherits the limits of  the linear alignment.

To overcome these limitations, we introduce AnchorDOT, a three-stage alignment framework (Figure~\ref{fig:overview}) that combines (i) the expressiveness of anchor-relative coordinates with (ii) the global matching guarantees of optimal transport~(OT) and (iii) the inference efficiency of parametric distillation. First, we project unpaired collections from both modalities into a common anchor-relative coordinate space. Next, rather than performing nearest-neighbor matching, which suffers from the \emph{hubness}  problem in which a small number of embeddings are the nearest neighbors of many queries~\citep{radovanovic2010hubs, dinu2014improving}, we formulate cross-modal alignment as a global entropic optimal transport problem over the unpaired pool. The transport problem's marginal constraints determine the total mass each point receives, avoiding hubs. Finally, because the optimal transport plan is transductive and defined only on the unpaired pool, we distill the computed assignment plan into two compact, modality-specific student encoders. At inference time, new queries are embedded in a single forward pass, with no dependence on stored memory banks.

Our contributions can be summarized as follows: (i) we align two frozen unimodal encoders from $N \sim 100$--$500$ paired anchors and distill the resulting transport plan into inductive encoders that require no reference collection at inference; (ii) we show that pairing is needed only for the anchors, i.e. the two unpaired collections may be sampled independently from their marginals, with no latent pairing in the pool; and (iii) holding the anchor-relative cost fixed, we show that enforcing both transport marginals leads to more accurate and significantly less hub-dominated correspondences than nearest neighbor and the one-step corrections used in retrieval.

\begin{figure}[tb]
    \centering
    \includegraphics[width=0.99\linewidth]{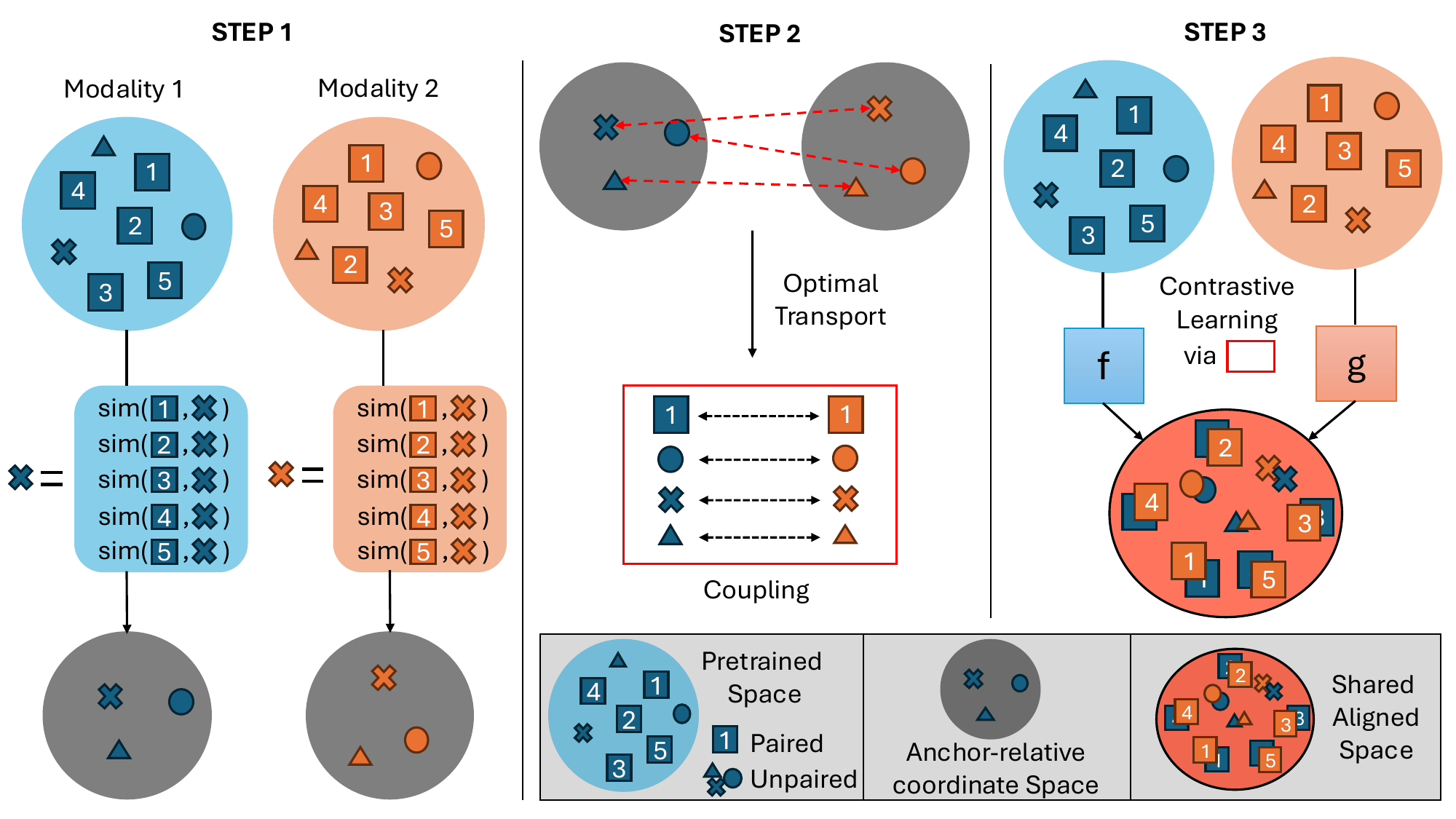}
    \caption{\textbf{Overview of AnchorDOT.} \textbf{Step 1:} modality-specific representations are expressed by their \textit{similarity profile} to a small set of shared anchors, embedding them in a common \emph{anchor-relative coordinate space}. \textbf{Step 2:} an entropic optimal transport coupling is solved between the two anchor-relative embedding spaces, recovering a soft cross-modal correspondence between points. \textbf{Step 3:} this coupling is used as a pseudo-pairing signal to train two lightweight encoders $f_\theta$ and $g_\psi$, via contrastive learning directly on the original pretrained embeddings, distilling the transport plan into a \emph{shared aligned space} that requires no anchors and no optimal transport at inference time.
    }
    \label{fig:overview}
\end{figure}

\section{Related work}
\label{sec:related}
\noindent \textbf{Anchor-relative representations.} \citet{moschella2022relative} represent a point by its cosine similarities to a fixed anchor set, which makes the representation invariant to angle-preserving transformations. Since networks trained on the same data from different seeds differ approximately by such transformations, this
enables zero-shot model stitching within a single modality. Similarly, \citet{cannistraci2023bootstrapping} discover new anchors by solving a Sinkhorn problem between relative representations at each optimization step. Their method is validated only on same-modality and cross-lingual communication. Their coupling is used to discover anchors rather than to align. Relative representations have since been used across modalities. ASIF \citep{norelli2023asif} builds a common space without training and performs retrieval via a sparsified nearest-neighbor rule. Because each query is decided independently against the reference collection, the collection has to describe the space densely enough for the correct neighbor to be present and to match on raw similarity: ASIF uses $1.6$ million reference pairs, and at $10^4$ pairs reaches only $18\%$ zero-shot ImageNet accuracy. It is also non-parametric, i.e. the whole collection is retained at inference. RELIT \citep{chen2023relit} builds the same anchor-relative representation, but only to mine pseudo-pairs by nearest-neighbor retrieval for pre-training a vision-language model, which is then fine-tuned on each task's full labelled data (the full Flickr30k training split, ${\sim}29$k images). Our $N$
anchors are instead the only paired supervision, and our matching is solved jointly over the pool. Concurrently, \citet{kim2026relative} align frozen encoders under scarce paired data using token-level relative representations with learnable anchors, which are optimized such that cross-modal similarity patterns agree. Our anchors are instead a given set of paired examples, and the matching is a transport problem rather than a similarity-matching objective. Outside vision-language, \citet{gu2022keypoint} guide an OT problem in heterogeneous domain adaptation with a relation profile over a handful of keypoints. Appendix~\ref{sec:KPG-RL} compares its construction to ours in detail. In summary, anchor-relative methods work with a few hundred anchors within a single modality, but have so far relied on far more paired data across modalities.

\noindent \textbf{Alignment with little or no supervision.} SUE \citep{yacobi2025sue} works at a comparable pair budget to ours but along a different axis: it computes a spectral embedding of each modality's random-walk graph independently, uses the pairs only to fit a linear alignment between the two spectral embeddings via Canonical Correlation Analysis (CCA), and refines with an MMD residual network. Its premise is that the two point clouds carry similar intra-domain structure, so the pairs are asked only to fix an ambiguity between two spaces that already correspond; where that structural correspondence is weak, little supervision remains to recover it. CSA \citep{li2025csa} applies CCA directly to the paired unimodal features and compares items by a cosine weighted by the canonical correlations. It trains no network, but fits on the pairs alone, leaves the unpaired data unused, and is validated with thousands of pairs ($5$k on Flickr30k), an order of magnitude above our budget. SOTAlign \citep{roschmann2026sotalign} fits a linear teacher on the same few pairs and regularizes a student network toward it on unpaired batches via an OT-based divergence. SOTAlign thus refines an existing alignment rather than discovering one, unlike ours. \citet{gholamzadeh2025model} build a cross-modal cost from intra-modal distances routed through the known pairs, an inter-modal bridge cost, which consults the anchors only through a per-pair minimum and so discards the profile of similarities to all anchors. At tens of thousands of pairs, two to nearly three orders of magnitude above the budget we target, STRUCTURE \citep{groger2025structure} aligns frozen encoders by regularizing towards each encoder's neighborhood geometry, using intra-modal structure as a penalty rather than to build the cost. With no pairs at all, Gromov-Wasserstein alignment \citep{alvarezmelis2018gromov} matches points by comparing intra-domain distances, MUSE \citep{conneau2017word} aligns adversarially, and Wasserstein Procrustes \citep{grave2019unsupervised} jointly estimates an orthogonal and a permutation matrix initialized by a convex relaxation. Its dictionary-seeded counterpart is the offline setting of \citet{smith2017offline}. All of these were developed and validated on two instances of the same modality (word embedding spaces of different languages). Gromov-Wasserstein does not require a shared dimension, but it matches on intra-domain geometry alone.

\noindent \textbf{Optimal transport as a distillation target.} Using a Sinkhorn coupling as soft supervision for a network has already been  proposed. OTTER \citep{wu2021data} computes an entropic plan within each batch of noisy web-collected image-text pairs and distils it as soft labels for contrastive pre-training. SwAV \citep{caron2020swav} enforces a Sinkhorn equipartition on cluster assignments within each batch, using the code of one augmented view to supervise the prediction from the other. S-CLIP \citep{mo2023sclip} transports unpaired images onto a few captioned images to build soft caption targets for CLIP. In each case, the similarity is available by construction: OTTER's teacher encoders are trained on paired data, SwAV compares two augmentations of a single image, and S-CLIP compares images with images in a pretrained CLIP space. 
Between independently trained unimodal encoders no such similarity exists, and the anchor-relative coordinates supply it.

\noindent \textbf{Hubness.} In high dimension, a few points, the hubs, are the nearest neighbors of many others \citep{radovanovic2010hubs}. The usual corrections are one-step rescalings of the similarity matrix, applied at retrieval time \citep{smith2017offline, conneau2017word, bogolin2022qbnorm}. Enforcing transport marginals instead constrains all assignments jointly. Section~\ref{sec:ot} relates the two.

\section{Method}
\label{sec:method}

\subsection{Problem setup and notation}

Let $x \in \Reals^{d_x}$ and $y \in \Reals^{d_y}$ denote representations obtained from two pre-trained, modality-specific encoders, and let $p(x, y)$ denote their joint distribution, with marginals $p(x)$ and $p(y)$. Our supervision is a set of $N$ \emph{paired anchors} drawn from the joint distribution,
\begin{equation}
    \mathcal{A} = \{(x_{a_k}, y_{a_k})\}_{k=1}^{N} \sim p(x,y)^{\otimes N}, \qquad N \in [100, 500].
\end{equation}
Beyond the anchors we assume access to two \emph{unpaired} collections drawn independently from the two \emph{marginals},
\begin{equation}
    \mathcal{X}_u = \{x_i\}_{i=1}^{n} \sim p(x)^{\otimes n}, \qquad
    \mathcal{Y}_u = \{y_j\}_{j=1}^{m} \sim p(y)^{\otimes m},
    \label{eq:setup}
\end{equation}
with $N \ll n, m$. 
The two collections may differ in size and require no hidden pairing. Ground-truth correspondence is therefore limited to the $N$ anchors. 

No map between $\Reals^{d_x}$ and $\Reals^{d_y}$ is given. The three steps below construct one: first a common coordinate system, then a correspondence within it, then an inductive model of that correspondence. Algorithm~\ref{alg:method} presents the complete procedure.

\subsection{Anchor-relative coordinate space}
\label{sec:relative}

A point's absolute coordinates carry no meaning across modality-specific encoders. However, its \emph{similarity profile} relative to a paired anchor set provides a modality-agnostic representation on a unified scale. We therefore express all cross-modal quantities through these anchors.

\begin{definition}[Relative representation]
For a similarity $\mathrm{sim}$ on $\Reals^{d}$ and an ordered anchor set ${\mathcal{A}_Z=\{z_1, \dots, z_N\} \subset \Reals^d}$, the relative representation of $z \in \Reals^{d}$ is
\begin{equation}
    \phi_{Z}(z) = \big[\, \mathrm{sim}(z, z_1),\, \dots,\, \mathrm{sim}(z, z_N) \,\big] \in \Reals^N .
\end{equation}
\end{definition}
Applied to each modality over the anchor sets $\mathcal{A}_X = \{x_{a_k}\}_{k=1}^N$ and $\mathcal{A}_Y = \{y_{a_k}\}_{k=1}^N$:
\begin{equation}
    \phi_X(x) = \big[\, \mathrm{sim}(x, x_{a_1}),\, \dots,\, \mathrm{sim}(x, x_{a_N}) \,\big], \quad
    \phi_Y(y) = \big[\, \mathrm{sim}(y, y_{a_1}),\, \dots,\, \mathrm{sim}(y, y_{a_N}) \,\big].
\end{equation}
Since $x_{a_k}$ and $y_{a_k}$ form a multimodal anchor pair, coordinate $k$ of $\phi_X(x)$ and coordinate $k$ of $\phi_Y(y)$ measure similarity to the same concept. The two vectors are therefore directly comparable coordinate by coordinate, regardless of the dimensionality, scale, or geometry of the underlying encoders, and all subsequent cross-modal computations take place in $\Reals^N$.

\paragraph{Anchor-orthonormalized similarity.} With unit-normalized embeddings and cosine similarity, coordinate $k$ of the profile is the projection of $x$ onto anchor $k$: $\phi_X(x)=A_X x$, where $A_X\in\Reals^{N\times d_x}$ stacks the anchors as rows. Since the anchors are sampled from the data, many of them are expected to be along the same dominant directions, with the profile counting them several times. Formally, $\langle\phi_X(x),\phi_X(x')\rangle = N\,x^\top M_X x'$ with $M_X=A_X^\top A_X/N$, instead of $x^\top x'$: directions along which many anchors lie are amplified relative to the others, and, for large $N$, this squares the spectrum of the embedding covariance (Appendix~\ref{app:spectrum}). We therefore compensate for $M_X$ within each modality, to a degree set by $\gamma$:
\begin{equation}
    \mathrm{sim}_\gamma(x,x') = x^\top M_X^{-\gamma/2}\, x', \qquad \phi_X^\gamma(x) = A_X M_X^{-\gamma/2}\, x, \qquad \gamma \in [0,2].
    \label{eq:orthonorm}
\end{equation}
Coordinate $k$ remains the similarity to anchor $k$. Setting $\gamma=0$ recovers plain cosine similarity. At $\gamma=1$, when $M_X$ is invertible, $\phi_X^1/\sqrt N$ is an exact isometric embedding of $\Reals^{d_x}$: $\langle\phi_X^1(x),\phi_X^1(x')\rangle=N\,x^\top x'$ for every $x,x'$, an orthonormalization (L\"owdin) of the anchor matrix itself rather than of the embedding covariance (Appendix~\ref{app:spectrum}). At $\gamma=2$, the profile instead compensates $M_X$ twice, which whitens the embedding covariance and makes the comparison equivalent to an unregularized canonical correlation across modalities. Larger $\gamma$ relies more heavily on directions poorly estimated from only $N$ anchors. Small eigenvalues of $M_X$ are clipped for numerical stability, and $\mathcal{Y}$ is treated in the same way with its own $M_Y$. We select $\gamma$ with the procedure of Appendix~\ref{app:hyperparam} and use $\mathrm{sim}=\mathrm{sim}_\gamma$ for the remainder of this work.

Placing the two modalities in one space makes them comparable, but does not say which unpaired $y_j\in\mathcal{Y}_u$ corresponds to a given unpaired $x_i\in\mathcal{X}_u$. That assignment is the matching problem of the next section, and the obvious rule for it, nearest neighbor, as used by ASIF's retrieval rule and by RELIT's pair-mining step \citep{norelli2023asif, chen2023relit}, typically fails due to hubness.

\subsection{Global entropic optimal transport in the anchor-relative space}
\label{sec:ot}

Once both modalities are expressed in the common anchor-relative space $\Reals^N$, cross-modal alignment reduces to a matching problem between two point clouds. Let $\phi_X(x_i)$ and $\phi_Y(y_j)$ denote the anchor-relative representations of the unpaired points of Equation~\ref{eq:setup}, and define the cross-modal cost
\begin{equation}
    C_{ij} = 1-k\big(\phi_X^\gamma(x_i), \phi_Y^\gamma(y_j)\big), \qquad i \in [n],\ j \in [m],
    \label{eq:cost}
\end{equation}
where $k$ is a similarity on $\Reals^N$, such as cosine similarity. It acts on anchor-relative representations and is therefore distinct from the $\mathrm{sim}$ of Section~\ref{sec:relative}, which acts on modality-specific representations. We then solve a single entropy-regularized optimal transport problem over the entire unpaired pool,
\begin{equation}
    \pi^\star \;=\; \argmin_{\pi \in \Pi(\mu, \nu)} \; \langle \pi, C \rangle \;-\; \varepsilon H(\pi),
    \qquad
    \Pi(\mu,\nu) = \big\{ \pi \in \Reals_+^{n \times m} \;:\; \pi \mathbf{1}_m = \mu, \;\; \pi^\top \mathbf{1}_n = \nu \big\},
    \label{eq:ot}
\end{equation}
where $H(\pi) = -\sum_{ij} \pi_{ij}(\log \pi_{ij} - 1)$ is the entropy \citep{Peyré2019Computational}, $\varepsilon > 0$ the regularization strength, and $\mu, \nu$ the uniform marginals on the two sides.

\paragraph{Why a global problem.} Training-free retrieval methods compare each item against the reference collection \citep{norelli2023asif}. Since nearest-neighbor matching in high dimension is dominated by hubs, a few points are typically matched with many queries \citep{radovanovic2010hubs}. Equation~\ref{eq:ot} couples all assignments instead: its marginal constraints fix the total mass each point receives, such that every $y_j$ receives exactly $\nu_j$. Existing methods used to mitigate hubness in retrieval can be related to optimal transport. The inverted softmax \citep{smith2017offline} is the first half-step of Sinkhorn's algorithm and normalizes a single marginal \citep{pan2025hubness}. CSLS \citep{conneau2017word} instead subtracts a fixed penalty on each side, the average similarity to each point's $K$ nearest neighbors (Appendix~\ref{app:csls}). Our coupling is thus a converged solution to these methods.

\paragraph{Solving Equation~\ref{eq:ot} at collection scale.} Since the entropic solution factorizes as ${\pi^\star_{ij} = a_i \, e^{-C_{ij}/\varepsilon} \, b_j}$, it is fully determined by the dual potentials $(u, v) = (\varepsilon \log a, \varepsilon \log b)$, which occupy $O(n+m)$ memory even when $C$ does not. We thus compute $C$ block-by-block on demand and never materialize it \citep{ye2026flashsinkhorn}. The exact global coupling is therefore attainable for large pool sizes, and the problem stays rectangular, with independent marginals and no assumption that the two collections share an indexing, as Equation~\ref{eq:setup} requires.

\subsection{From a transductive coupling to inductive encoders via distillation}
\label{sec:distillation}
 
The coupling $\pi^\star$ is transductive: it is defined only over the points jointly considered when it was solved, so it cannot score an unseen point without re-solving Equation~\ref{eq:ot} against a reference collection. We therefore train two lightweight, modality-specific encoders,
\begin{equation}
    f_\theta : \Reals^{d_x} \to \Reals^{d_z}, \qquad g_\psi : \Reals^{d_y} \to \Reals^{d_z},
\end{equation}
mapping raw (not relative) embeddings into a shared $d_z$-dimensional space, so that at inference time a new modality-specific representation can be embedded with a single forward pass through $f_\theta$ or $g_\psi$, with no dependence on the anchor set or on any reference collection.
 
\paragraph{Training in chunks.}\label{sec:chunking} The encoders are trained contrastively, so each point must be scored against a set of candidates, and the pool is too large for one softmax over all of it. We therefore draw chunks $c=(c_X,c_Y)$, with $c_X \subseteq [n]$ and $c_Y \subseteq [m]$ indexed independently and $|c_X|=|c_Y|=B$, and take one gradient step per chunk. The transport itself is not chunked. Because the potentials determine $\pi^\star$ everywhere, the block a chunk needs is available in closed form,
\begin{equation}
    \pi^\star_{ij} = \exp\!\big( (u_i + v_j - C_{ij})/\varepsilon \big), \qquad i \in c_X,\ j \in c_Y,
    \label{eq:pi-block}
\end{equation}
requiring no further Sinkhorn iterations. Each chunk is therefore supervised by an exact restriction of the global plan, and the chunking is confined to the gradient estimate.

\begin{algorithm}[tb]
\caption{Representation alignment from a few paired examples via distilled OT}
\label{alg:method}
\begin{algorithmic}[1]
\Require Unlabeled pools $\mathcal{X}_u, \mathcal{Y}_u$ (drawn independently, Eq.~\ref{eq:setup}); paired anchor set $\mathcal{A}$, $|\mathcal{A}| = N$; chunk size $B$; similarity $\mathrm{sim}$; regularization $\varepsilon$
\State Compute relative representations $\phi_X(x_i)$, $\phi_Y(y_j)$ for all unpaired points, using $\mathcal{A}$ \Comment{Sec.~\ref{sec:relative}}
\State Solve the \emph{global} entropic OT problem over the full pool by streaming $C$ in tiles, obtaining the dual potentials $(u, v)$ \Comment{Sec.~\ref{sec:ot}, Eq.~\ref{eq:ot}}
\State Initialize encoders $f_\theta$, $g_\psi$
\While{not converged}
    \State Draw the next chunk $c=(c_X,c_Y)$ of size $B$, or the anchor chunk $\mathcal{A}$ with $\pi^{(\mathcal{A})}=I$ \Comment{Sec.~\ref{sec:chunking}}
    \State Recover the exact block $\pi^\star_{ij}$, $i \in c_X$, $j \in c_Y$, in closed form from $(u, v)$ \Comment{Eq.~\ref{eq:pi-block}}
    \State Compute distillation loss $\mathcal{L}^{(c)}(\theta, \psi)$ \Comment{Eq.~\ref{eq:distill-i2t}--\ref{eq:distill-t2i}}
    \State Update $\theta, \psi$ by gradient descent
\EndWhile
\State \Return inductive encoders $f_\theta$, $g_\psi$
\end{algorithmic}
\end{algorithm}

\paragraph{Soft-target distillation.} Each point of the chunk is supervised by its own row or column of the restricted plan, normalized within the chunk. Define the row- and column-normalized plans
\begin{equation}
    \bar\pi_{ij} = \frac{\pi^{\star}_{ij}}{\sum_{j' \in c_Y} \pi^{\star}_{ij'}}, \qquad
    \bar\pi'_{ij} = \frac{\pi^{\star}_{ij}}{\sum_{i' \in c_X} \pi^{\star}_{i'j}},
\end{equation}
which redistribute, for each $x_i$ (respectively each $y_j$), the transport mass it receives over the candidates present in the chunk. A point that is ambiguous or poorly covered by the anchors simply yields a more diffuse target distribution rather than a degenerate one; every point of the chunk is both a source of supervision and a candidate for every other point's softmax.

\paragraph{Direction-asymmetric distillation loss.} We minimize a soft-target contrastive loss in each direction separately, using the full chunk both as the set of supervised points and as the candidate pool against which the softmax is computed:
\begin{align}
    \mathcal{L}^{(c)}_{X\to Y}(\theta, \psi, s) &= -\frac{1}{|c_X|}\sum_{i \in c_X} \sum_{j \in c_Y} \bar\pi_{ij} \, \log \frac{\exp\big(s\, f_\theta(x_i)^\top g_\psi(y_j)\big)}{\sum_{j' \in c_Y} \exp\big(s\, f_\theta(x_i)^\top g_\psi(y_{j'})\big)}, \label{eq:distill-i2t} \\
    \mathcal{L}^{(c)}_{Y\to X}(\theta, \psi, s) &= -\frac{1}{|c_Y|}\sum_{j \in c_Y} \sum_{i \in c_X} \bar\pi'_{ij} \, \log \frac{\exp\big(s\, g_\psi(y_j)^\top f_\theta(x_i)\big)}{\sum_{i' \in c_X} \exp\big(s\, g_\psi(y_j)^\top f_\theta(x_{i'})\big)}, \label{eq:distill-t2i}
\end{align}
with $s = 1/\tau > 0$ a learned inverse-temperature scalar clamped to $s \le 100$, following CLIP \citep{radford2021clip}, and total objective $\mathcal{L}^{(c)} = \tfrac{1}{2}(\mathcal{L}^{(c)}_{X\to Y} + \mathcal{L}^{(c)}_{Y\to X})$. The anchor set enters as one further chunk with $\pi^{(\mathcal{A})} = I$, so the $N$ known pairs also supervise the encoders directly as one-hot targets, recovering a standard InfoNCE loss on this chunk. Appendix~\ref{app:distill} gives the training schedule and Appendix~\ref{app:experiments} reports further ablations.

\section{Experiments}
\label{sec:experiments}

In this section, we evaluate our method across a diverse set of experiments. We benchmark cross-modal retrieval against established baselines on two structurally different modality pairs, including image-text and molecule-text, at low anchor budget ($N\leq500$). We isolate the contribution of the marginal constraint by comparing it against one-step hubness corrections from the bilingual lexicon induction literature, and we test whether our method needs a hidden pairing in the unpaired pool.

\paragraph{Image-text.} Following the experimental setting of~\citet{yacobi2025sue}, we use Flickr30k \citep{young2014flickr30k} and MS-COCO \citep{lin2014mscoco} image-caption pairs. Image embeddings are extracted with a frozen DINOv2 ViT-S/14 backbone \citep{caron2021dino}. Caption embeddings are extracted with a frozen SBERT model, all-MiniLM-L6-v2 \citep{reimers2019sbert}.

\paragraph{Molecule-text.} We additionally use ChEBI-20 \citep{edwards2021text2mol}, a dataset of molecule-description pairs combining PubChem compounds with ChEBI textual annotations. Molecule embeddings are extracted with a frozen MoLFormer-XL \citep{ross2022molformer} encoder. Description embeddings are extracted with the same frozen SBERT model used for the image-text experiments, so that the text-side encoder is held fixed across modality pairs.

\paragraph{General protocol.} In every setting, all backbone encoders are frozen and embeddings are precomputed once. Only the lightweight encoders $f_\theta, g_\psi$ of Section~\ref{sec:distillation} are trained. The architecture of the encoders is kept identical to the ones used by SOTAlign (a single LayerNorm with a Linear projection layer). We sample $N =500$ paired anchors (Section~\ref{sec:relative}) and hold out a test split never used during OT coupling or distillation. Unless otherwise noted, we use a pool of ${\sim}29$k items split into disjoint image-only and caption-only halves, and a held-out test set of $400$ pairs, following SUE's protocol on Flickr30k. This test set is smaller than the standard Flickr30k 1K and MS-COCO 5K test sets, so our numbers are not comparable to the CLIP-era retrieval literature. We report  recall@$k$ in both retrieval directions for $k \in \{1, 5, 10\}$ on the held-out split, using nearest-neighbor search between $f_\theta(x_{\text{test}})$ and $g_\psi(y_{\text{test}})$.

\begin{table}[t!]
\caption{Cross-modal retrieval at $N=500$ anchors, recall@$k$ (\%). All baselines are re-run under our protocol, with the same frozen backbones, anchors and test split, averaged over five seeds. Our re-run of SUE exceeds its published Flickr30k results at this budget (I2T R@1 $7.80$
vs.\ $4.25$).}
\label{tab:e1}\centering
  \centering
  \footnotesize
\setlength{\tabcolsep}{3.8pt}
\begin{tabular}{lcccccccccccc}
\toprule
& \multicolumn{6}{c}{\textbf{Flickr30k}} & \multicolumn{6}{c}{\textbf{MS-COCO}} \\[1pt]
\cmidrule(lr){2-7}\cmidrule(lr){8-13}
& \multicolumn{3}{c}{\textbf{\textit{Image $\rightarrow$ Text}}} & \multicolumn{3}{c}{\textbf{\textit{Text $\rightarrow$ Image}}} & \multicolumn{3}{c}{\textbf{\textit{Image $\rightarrow$ Text}}} & \multicolumn{3}{c}{\textbf{\textit{Text $\rightarrow$ Image}}} \\
\cmidrule(lr){2-4}\cmidrule(lr){5-7}\cmidrule(lr){8-10}\cmidrule(lr){11-13}
\rowcolor{gray!6}
Method & R@1 & R@5 & R@10 & R@1 & R@5 & R@10 & R@1 & R@5 & R@10 & R@1 & R@5 & R@10 \\
\midrule
Contrastive & 22.05 & 49.10 & 60.45 & 22.30 & 48.05 & 59.60 & 14.60 & 40.65 & 54.95 & 15.00 & 40.55 & 54.30 \\
\rowcolor{gray!6}
ASIF 
& 10.35 & 29.10 & 42.45 & 9.55 & 29.80 & 43.05 & 15.15 & 48.65 & 69.95 & 14.20 & 48.80 & 69.70 \\
CSA 
& 0.90 & 3.15 & 5.60 & 0.95 & 3.55 & 6.10 & 1.30 & 5.55 & 9.40 & 1.35 & 5.70 & 8.95 \\
\rowcolor{gray!6}
SUE 
& 7.80 & 24.50 & 37.50 & 7.35 & 26.80 & 38.05 & 9.90 & 36.60 & 55.80 & 9.10 & 37.25 & 56.35 \\
SOTAlign 
& 35.60 & 65.00 & 76.20 & 35.20 & 65.10 & 75.55 & \underline{\textbf{30.50}} & 70.10 & 84.50 & \underline{\textbf{30.90}} & 69.70 & 84.50 \\
\midrule
\rowcolor{gray!6}
\textbf{Ours} & \underline{\textbf{36.55}} & \underline{\textbf{67.90}} & \underline{\textbf{79.25}} & \underline{\textbf{36.05}} &\underline{\textbf{68.10}} & \underline{\textbf{79.40}} & 29.05 & \underline{\textbf{71.05}} & \underline{\textbf{85.35}} & 30.30 & \underline{\textbf{70.95}} & \underline{\textbf{86.45}} \\
\bottomrule
\end{tabular}
\end{table}

\paragraph{Retrieval performance.} All competing methods are tested under our protocol, with the same backbones, anchors, unpaired pool (${\sim}80$k items on MS-COCO) and test split. Three use the $N$ anchors only: \textbf{ASIF} \citep{norelli2023asif}, \textbf{CSA} \citep{li2025csa}, and a \textbf{contrastive} baseline, i.e.\ our encoders and loss trained on the anchor chunk alone, which isolates the contribution of the coupling. \textbf{SUE} \citep{yacobi2025sue} and \textbf{SOTAlign} \citep{roschmann2026sotalign} also use the unpaired pool, whose image and text parts are disjoint. On image-text (Table~\ref{tab:e1}), our method is best in $10$ of $12$ columns, and SOTAlign and ours hold the top two positions in every column. Against SOTAlign, we lead by $0.9$ R@$1$ points on Flickr30k and SOTAlign leads by $1.4$/$0.6$ on MS-COCO, i.e.\ a few test queries per seed, so we consider the two tied at R@$1$. At R@$5$ and R@$10$, we lead in all eight entries ($+0.9$ to $+3.9$). Distilling the coupling multiplies the contrastive baseline's R@$1$ by $1.6$--$2.0$, a gain that comes entirely from unpaired data. ASIF, which retrieves each query independently by nearest neighbor, is $14$--$27$ R@$1$ points below ours. SUE is below the contrastive baseline at R@$1$ and R@$5$, and CSA, validated with thousands of pairs, stays near chance.

\begin{table*}[t!]
\centering
\footnotesize
\begin{minipage}[t]{0.55\textwidth}
\centering
\caption{Molecule-text retrieval on ChEBI-20 at $N=500$.}
\label{tab:e2}
\setlength{\tabcolsep}{4pt}
\begin{tabular}{lcccccc}
\toprule
& \multicolumn{3}{c}{\textbf{\textit{Molecule $\rightarrow$ Text}}} & \multicolumn{3}{c}{\textbf{\textit{Text $\rightarrow$ Molecule}}} \\
\cmidrule(lr){2-4}\cmidrule(lr){5-7}
\rowcolor{gray!6}
Method & R@1 & R@5 & R@10 & R@1 & R@5 & R@10 \\
\midrule
Contrastive & 16.10 & 37.90 & 48.20 & 15.10 & 36.15 & 47.05 \\
\rowcolor{gray!6}
ASIF     & 8.40 & 28.20 & 40.90 & 14.55 & 35.85 & 49.40 \\
CSA      & 0.20 & 1.10 & 2.55 & 0.65 & 2.45 & 4.40 \\
\rowcolor{gray!6}
SUE      & 3.25 & 14.85 & 25.05 & 3.35 & 15.10 & 26.10 \\
SOTAlign & 20.30 & 47.15 & 60.60 & 20.30 & 47.80 & 62.20 \\
\midrule
\rowcolor{gray!6}
\textbf{Ours} & \underline{\textbf{22.10}} & \underline{\textbf{49.75}} & \underline{\textbf{63.25}} & \underline{\textbf{22.05}} & \underline{\textbf{51.00}} & \underline{\textbf{63.70}} \\
\bottomrule
\end{tabular}
\end{minipage}%
\hfill
\begin{minipage}[t]{0.40\textwidth}
\centering
\caption{Zero-shot classification accuracy (\%) on $700$ ImageNet images from 7 classes. Methods trained on Flickr30k or MS-COCO. Mean (std) over 5 seeds.}
\label{tab:zeroshot}
\setlength{\tabcolsep}{4pt}
\begin{tabular}{lcc}
\toprule
Method & Flickr30k & MS-COCO \\
\midrule
\rowcolor{gray!6}
SUE      & 72.26 (2.59) & 63.97 (8.99) \\
SOTAlign & 87.51 (2.73) & \underline{\textbf{95.34}} (1.73) \\
\rowcolor{gray!6}
\textbf{Ours} & \underline{\textbf{89.91}} (3.89) & 93.63 (3.33) \\
\midrule
CLIP & \textit{98.43} & \textit{98.43} \\
\bottomrule
\end{tabular}
\end{minipage}
\end{table*}

\paragraph{Molecule-text retrieval.} On ChEBI-20 (Table~\ref{tab:e2}), where no prior method operates at $N\le500$, our method is first in all six columns, $1.5$--$3.2$ points ahead of SOTAlign. The gain over the contrastive baseline is smaller than on image-text (R@$1$ $\times1.4$--$1.5$), and SUE falls furthest, to a fifth of the contrastive baseline's R@$1$.

\begin{figure}[tb]
    \centering
    \includegraphics[width=0.99\linewidth]{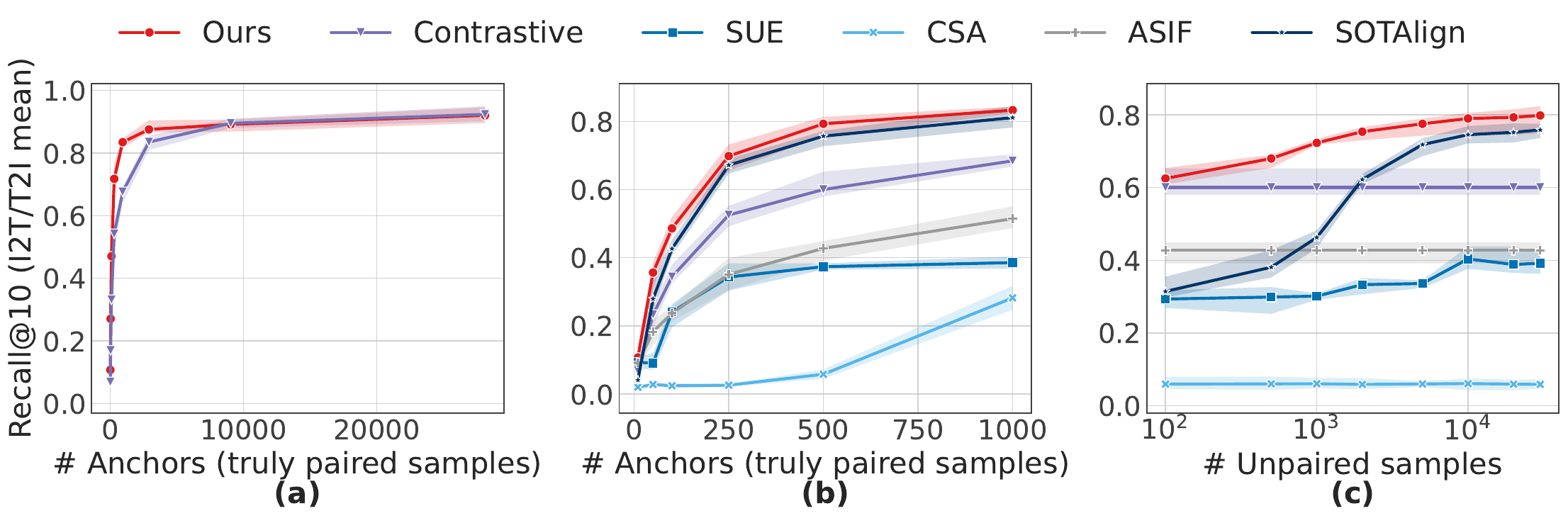}
    \caption{\textbf{Recall@10 (I2T/T2I mean) under two independent ablations on the pool composition.} \textbf{(a)--(b)} Recall@$10$ vs.\ number of anchor pairs $N$, at fixed pool and test split, over the full and low-$N$ ranges respectively: (a) restricted to our method and the contrastive baseline, (b) all six methods of Table~\ref{tab:e1}. \textbf{(c)} Recall@$10$ vs.\ unpaired pool size at fixed $N=500$ and test split.}
    \label{fig:recall_comparison}
\end{figure}
\begin{wrapfigure}{r}{0.47\textwidth}
  \centering
  \includegraphics[width=0.45\textwidth]{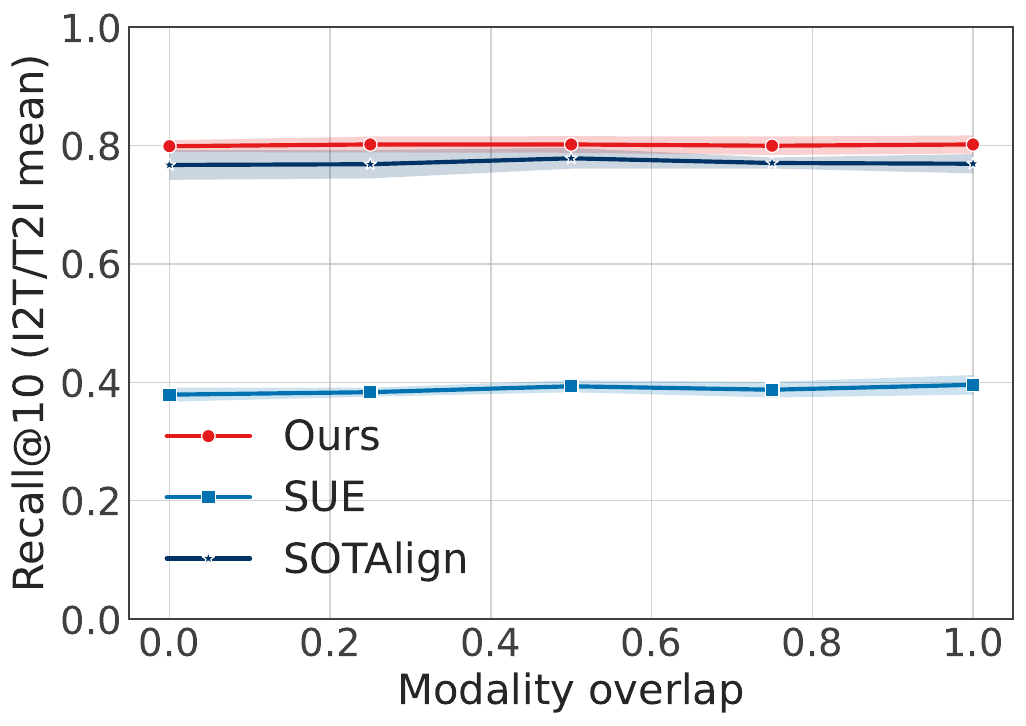}
  \caption{Recall@$10$ on Flickr30k as a function of the overlap $o$, for SUE, SOTAlign and our method.}
  \label{fig:overlap}
\end{wrapfigure}

\paragraph{Anchor budget and pool size.}
On Flickr30k (Figure~\ref{fig:recall_comparison}b), our method consistently outperforms SOTAlign and the contrastive baseline across the evaluated anchor budgets. The gap with SOTAlign remains relatively stable as the number of paired anchors increases, while both methods improve with additional paired data. When the number of anchors becomes very large (a), our method and the contrastive baseline approach similar performance from $N \sim 8k$, reaching approximately $0.9$ R@$10$. This panel isolates the contribution of our OT-learned coupling on unpaired data which decreases as the pool of unpaired is replaced by paired data. 
In contrast, increasing the size of the unpaired pool (c) consistently benefits our method, which improves from approximately $0.63$ R@$10$ with $10^2$ unpaired samples to $0.80$ with $3\times10^4$ samples.

\paragraph{Hidden pairing influence.} 
We vary the overlap $o\in[0,1]$, the fraction of the unpaired pool whose image and caption come from the same underlying pair, at fixed $N=500$, pool size and test split, and retrain all methods at every value (Figure~\ref{fig:overlap}). 
At $o=1$, every unpaired item has its counterpart on the other side, although the pairing is not revealed. At $o=0$, our default in the rest of the paper, the image-only and caption-only pools are disjoint, as when each modality is collected independently and no cross-modal pair exists beyond the $N$ anchors. Recall@$10$ is essentially unchanged across the whole range, so the methods are as effective without a hidden correspondence as with one. This is consistent with the fact that the unpaired collections can be drawn independently from the two marginals (Equation \ref{eq:setup}).

\paragraph{Hubness reduction.} 
We compare four matching rules on the same anchor-relative cost: (i) nearest neighbor, as in ASIF \citep{norelli2023asif}, (ii) the inverted softmax \citep{smith2017offline}, (iii) CSLS \citep{conneau2017word}, and (iv) our converged Sinkhorn coupling. The inverted softmax uses the same entropic kernel as our coupling but normalizes only the gallery side, so the gap between the two isolates the effect of enforcing both marginals. The rules are applied directly to the $400$ held-out test pairs, before any distillation. We measure hubness in both retrieval directions by the skewness of the $k$-occurrence distribution $N_{10}$ \citep{radovanovic2010hubs}, the number of queries whose top-$10$ contains each gallery item. On Flickr30k (Figure~\ref{fig:hubness_skewness}), our coupling has the lowest skewness in both directions ($0.32$/$0.37$, against $2.40$/$1.52$ for nearest neighbor, $1.68$/$2.18$ for the inverted softmax and $1.12$/$0.66$ for CSLS) and the highest recall@$10$ ($0.80$/$0.80$, against $0.67$/$0.64$ for nearest neighbor and $0.76$--$0.77$ for the two one-step corrections). Normalizing a single marginal is not enough: the inverted softmax even increases hubness over nearest neighbor in the text-to-image direction.

\begin{figure}[tb]
    \centering
    \includegraphics[width=\linewidth]{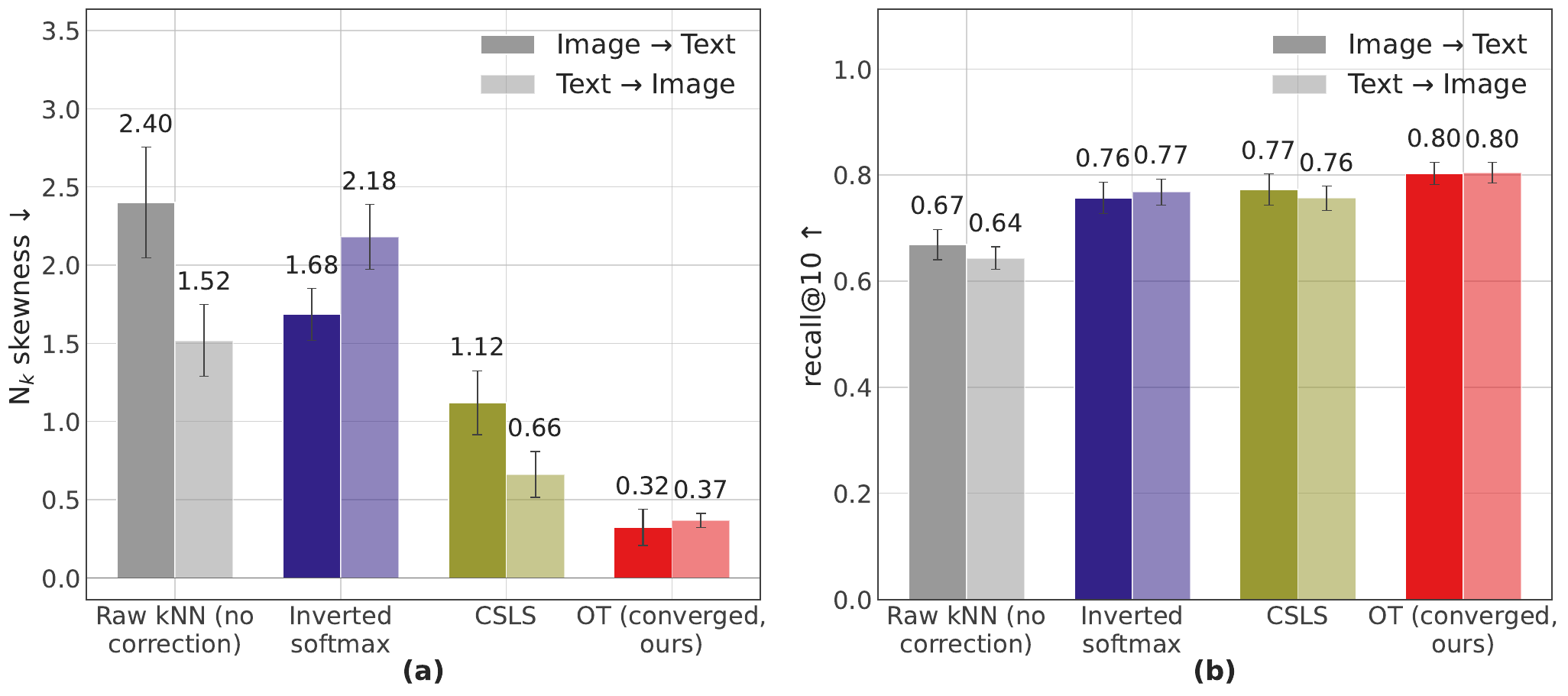}
    \caption{\textbf{(a)} $N_{10}$ skewness (lower is better) and \textbf{(b)} recall@$10$ of nearest neighbor, the inverted softmax, CSLS and our converged coupling, on the same anchor-relative cost between the $400$ held-out Flickr30k test pairs, in both retrieval directions. Error bars denote $\pm1$ std over $5$ seeds.}
    \label{fig:hubness_skewness}
\end{figure}

\paragraph{Zero-shot classification.} Following \citet{yacobi2025sue}, we classify $700$ ImageNet images from seven categories (cat, dog, man, woman, mountain, shore, snow) with the encoders trained on Flickr30k or MS-COCO, without further training, by cosine similarity to the embeddings of the prompts ``A photo of a \{\}''. Our encoders never see these images, categories or the task during training. At the same anchor budget, they outperform SUE by $18$--$30$ points and are comparable to SOTAlign, $2.4$ points ahead on Flickr30k and $1.7$ points below on MS-COCO (Table~\ref{tab:zeroshot}, examples in Appendix~\ref{sec:exp-classif}).

\paragraph{Further experiments.} 
Beyond retrieval, we analyze the evolution of the participation ratio of the learned representation with $\gamma$ (Appendix~\ref{app:pr}), compare our method against classical unsupervised approaches (Appendix~\ref{sec:exp-unsup}), and ablate two choices underlying our matching cost (Appendix~\ref{app:transport_plan}): injecting a structural term via Fused Gromov--Wasserstein and replacing the $N$-anchor relative representation itself. We then show that the learned shared representation supports qualitative cross-modal generation (Appendix~\ref{sec:exp-gen}). Finally, we evaluate zero-shot retrieval (Appendix~\ref{sec:zero_shot_retrieval}) and provide additional examples for zero-shot classification and classical retrieval (Appendices \ref{sec:exp-classif} and ~\ref{app:retrieval_ex}).

\section{Conclusion and future work}

We investigated the extreme low-resource regime of aligning two frozen unimodal encoders from only $N \sim 100$--$500$ paired examples. Our framework expresses both modalities in anchor-relative coordinates, matches the unpaired collections with a global entropic optimal transport problem whose marginal constraints reduce hubness more than one-step corrections, and distills the resulting plan into two compact, modality-specific encoders. At the same anchor budget, it outperforms anchor-relative and graph-based methods by a wide margin, matches SOTAlign at R@$1$ on image-text and outperforms it elsewhere, and requires no hidden pairing in the unpaired pool.

Zero-pair methods reach near-random recall on our encoders, and a Gromov-Wasserstein term on intra-modal geometry never improves our coupling: consistent with \citet{groger2026aristotelian}, independently trained encoders do not share enough structure to be aligned without supervision. Our method remains bounded by the quality of these encoders, and its uniform marginals assume that both collections come from the same joint distribution (Equation~\ref{eq:setup}), which unbalanced optimal transport would relax. Finally, our anchors are drawn at random, and choosing which items to pair is a promising direction for domains such as medical imaging, where each pair requires expert annotation.

\section*{Acknowledgments}
This work was supported by ANR through the ``Investissements d’avenir'' program (ANR-10-IAIHU-06, ANR-19-P3IA-0001, PRAIRIE 3IA Institute), the ``France 2030'' program (ANR-23-IACL-0008, PRAIRIE-PSAI).  The ARAMIS Lab is affiliated with DIM C-BRAINS, funded by the Conseil Régional d’Ile-de-France. This work was performed using HPC resources from GENCI-IDRIS (Grant 2026-AD011017987). RD received Marie Skłodowska-Curie grant No. 101154248 (SafeREG).

\subsection*{AI use statement}
In this work, we used generative AI tools to aid or polish writing: every paragraph of this paper was checked by AI and by us to look for writing mistakes or wrong phrasing.

We also used generative AI tools for retrieval and discovery: the pipeline was entirely designed by us without the use of AI. We performed an initial literature review and analyzed relevant papers. However, we used AI to confirm implementation details of competing methods and to search for potential missing related work. Our related work section was drafted by us.

We also used generative AI tools for research ideation and execution: we mainly used AI to test and execute our ideas. A significant part of our code was generated by AI and then manually checked to verify that it was consistent with our intent.

Finally, we used generative AI tools to draft sections of the paper: some sections were first drafted by AI (results, appendix) and then revised by us to ensure they reflected our own thinking. Tables were done manually. Figures were generated locally.

We take full responsibility for the final content of this work, including text, claims, and artifacts produced with the aid of generative AI.

\subsection*{Ethics statement}

We do not anticipate any ethical concerns arising from this work.



\bibliographystyle{plainnat}
\bibliography{References}

@article{moschella2022relative,
  title={Relative representations enable zero-shot latent space communication},
  author={Moschella, Luca and Maiorca, Valentino and Fumero, Marco and Norelli, Antonio and Locatello, Francesco and Rodol{\`a}, Emanuele},
  journal={arXiv preprint arXiv:2209.15430},
  year={2022}
}

@inproceedings{norelli2023asif,
  title={{ASIF}: Coupled Data Turns Unimodal Models to Multimodal Without Training},
  author={Norelli, Antonio and Fumero, Marco and Maiorca, Valentino and Moschella, Luca and Rodol\`a, Emanuele and Locatello, Francesco},
  booktitle={Advances in Neural Information Processing Systems (NeurIPS)},
  year={2023}
}

@inproceedings{alvarezmelis2018gromov,
  title={Gromov-Wasserstein alignment of word embedding spaces},
  author={Alvarez-Melis, David and Jaakkola, Tommi S.},
  booktitle={Conference on Empirical Methods in Natural Language Processing (EMNLP)},
  year={2018}
}

@inproceedings{grave2019unsupervised,
  title={Unsupervised alignment of embeddings with wasserstein procrustes},
  author={Grave, Edouard and Joulin, Armand and Berthet, Quentin},
  booktitle={The 22nd International Conference on Artificial Intelligence and Statistics},
  pages={1880--1890},
  year={2019},
  organization={PMLR}
}

@article{radovanovic2010hubs,
  title={Hubs in space: Popular nearest neighbors in high-dimensional data},
  author={Radovanovi{\'c}, Milo{\v{s}} and Nanopoulos, Alexandros and Ivanovi{\'c}, Mirjana},
  journal={Journal of Machine Learning Research},
  volume={11},
  pages={2487--2531},
  year={2010}
}

@article{dinu2014improving,
  title={Improving zero-shot learning by mitigating the hubness problem},
  author={Dinu, Georgiana and Lazaridou, Angeliki and Baroni, Marco},
  journal={arXiv preprint arXiv:1412.6568},
  year={2014}
}

@inproceedings{radford2021clip,
  title={Learning transferable visual models from natural language supervision},
  author={Radford, Alec and Kim, Jong Wook and Hallacy, Chris and Ramesh, Aditya and Goh, Gabriel and Agarwal, Sandhini and Sastry, Girish and Askell, Amanda and Mishkin, Pamela and Clark, Jack and others},
  booktitle={International Conference on Machine Learning (ICML)},
  year={2021}
}

@inproceedings{caron2021dino,
  title={Emerging properties in self-supervised vision transformers},
  author={Caron, Mathilde and Touvron, Hugo and Misra, Ishan and J{\'e}gou, Herv{\'e} and Mairal, Julien and Bojanowski, Piotr and Joulin, Armand},
  booktitle={International Conference on Computer Vision (ICCV)},
  year={2021}
}

@inproceedings{reimers2019sbert,
  title={Sentence-{BERT}: Sentence embeddings using {S}iamese {BERT}-networks},
  author={Reimers, Nils and Gurevych, Iryna},
  booktitle={Conference on Empirical Methods in Natural Language Processing (EMNLP)},
  year={2019}
}

@article{young2014flickr30k,
  title={From image descriptions to visual denotations: New similarity metrics for semantic inference over event descriptions},
  author={Young, Peter and Lai, Alice and Hodosh, Micah and Hockenmaier, Julia},
  journal={Transactions of the Association for Computational Linguistics},
  volume={2},
  pages={67--78},
  year={2014}
}

@inproceedings{lin2014mscoco,
  title={Microsoft {COCO}: Common objects in context},
  author={Lin, Tsung-Yi and Maire, Michael and Belongie, Serge and Hays, James and Perona, Pietro and Ramanan, Deva and Doll{\'a}r, Piotr and Zitnick, C. Lawrence},
  booktitle={European Conference on Computer Vision (ECCV)},
  year={2014}
}

@inproceedings{cannistraci2023bootstrapping,
  title={Bootstrapping Parallel Anchors for Relative Representations},
  author={Cannistraci, Irene and Moschella, Luca and Maiorca, Valentino and Fumero, Marco and Norelli, Antonio and Rodol\`a, Emanuele},
  booktitle={Tiny Papers Track, International Conference on Learning Representations (ICLR)},
  year={2023}
}

@inproceedings{chen2023relit,
  title={Weakly Supervised Vision-and-Language Pre-training with Relative Representations},
  author={Chen, Chi and Li, Peng and Sun, Maosong and Liu, Yang},
  booktitle={Proceedings of the 61st Annual Meeting of the Association for Computational Linguistics (ACL)},
  year={2023}
}

@inproceedings{yacobi2025sue,
  title={Learning Shared Representations from Unpaired Data},
  author={Yacobi, Amitai and Ben-Ari, Nir and Talmon, Ronen and Shaham, Uri},
  booktitle={Advances in Neural Information Processing Systems (NeurIPS)},
  year={2025}
}

@article{conneau2017word,
  title={Word translation without parallel data},
  author={Conneau, Alexis and Lample, Guillaume and Ranzato, Marc'Aurelio and Denoyer, Ludovic and J{\'e}gou, Herv{\'e}},
  journal={arXiv preprint arXiv:1710.04087},
  year={2017}
}

@inproceedings{vayer2019fgw,
  title={Optimal Transport for structured data with application on graphs},
  author={Vayer, Titouan and Chapel, Laetitia and Flamary, R{\'e}mi and Tavenard, Romain and Courty, Nicolas},
  booktitle={International Conference on Machine Learning (ICML)},
  year={2019}
}

@article{gholamzadeh2025model,
  title={Model alignment using inter-modal bridges},
  author={Gholamzadeh, Ali and Sajid, Noor},
  journal={arXiv preprint arXiv:2505.12322},
  year={2025}
}

@article{ross2022molformer,
  title={Large-scale chemical language representations capture molecular structure and properties},
  author={Ross, Jerret and Belgodere, Brian and Chenthamarakshan, Vijil and Padhi, Inkit and Mroueh, Youssef and Das, Payel},
  journal={Nature Machine Intelligence},
  volume={4},
  pages={1256--1264},
  year={2022}
}

@inproceedings{edwards2021text2mol,
  title={Text2Mol: Cross-Modal Molecule Retrieval with Natural Language Queries},
  author={Edwards, Carl and Zhai, ChengXiang and Ji, Heng},
  booktitle={Conference on Empirical Methods in Natural Language Processing (EMNLP)},
  year={2021}
}

@article{smith2017offline,
  title={Offline bilingual word vectors, orthogonal transformations and the inverted softmax},
  author={Smith, Samuel L and Turban, David HP and Hamblin, Steven and Hammerla, Nils Y},
  journal={arXiv preprint arXiv:1702.03859},
  year={2017}
}

@article{Peyré2019Computational,
  title={Computational Optimal Transport},
  author={Gabriel Peyré and Marco Cuturi},
  year={2019},
  journal={now publishers, Inc. eBooks},
  doi={10.1561/9781680835519}
}

@misc{ye2026flashsinkhorn,
  title={{FlashSinkhorn}: {IO}-Aware Entropic Optimal Transport on {GPU}},
  author={Ye, Felix X.-F. and Li, Xingjie and Yu, An and Chang, Ming-Ching and Chu, Linsong and Wertheimer, Davis},
  year={2026},
  eprint={2602.03067},
  archivePrefix={arXiv},
  primaryClass={cs.LG}
}

@article{wu2021data,
  title={Data efficient language-supervised zero-shot recognition with optimal transport distillation},
  author={Wu, Bichen and Cheng, Ruizhe and Zhang, Peizhao and Gao, Tianren and Vajda, Peter and Gonzalez, Joseph E},
  journal={arXiv preprint arXiv:2112.09445},
  year={2021}
}

@inproceedings{caron2020swav,
  title={Unsupervised Learning of Visual Features by Contrasting Cluster Assignments},
  author={Caron, Mathilde and Misra, Ishan and Mairal, Julien and Goyal, Priya and Bojanowski, Piotr and Joulin, Armand},
  booktitle={Advances in Neural Information Processing Systems (NeurIPS)},
  year={2020}
}

@inproceedings{bogolin2022qbnorm,
  title={Cross Modal Retrieval with Querybank Normalisation},
  author={Bogolin, Simion-Vlad and Croitoru, Ioana and Jin, Hailin and Liu, Yang and Albanie, Samuel},
  booktitle={IEEE/CVF Conference on Computer Vision and Pattern Recognition (CVPR)},
  year={2022}
}

@misc{groger2025structure,
  title={With Limited Data for Multimodal Alignment, Let the {STRUCTURE} Guide You},
  author={Gr{\"o}ger, Fabian and Wen, Shuo and Le, Huyen and Brbi{\'c}, Maria},
  year={2025},
  eprint={2506.16895},
  archivePrefix={arXiv},
  primaryClass={cs.LG}
}

@misc{kim2026relative,
  title={Learning Relative Representations for Fine-Grained Multimodal Alignment with Limited Data},
  author={Kim, Shiwon and Park, Yu Rang},
  year={2026},
  eprint={2605.16834},
  archivePrefix={arXiv},
  primaryClass={cs.CV}
}

@article{huh2024platonic,
  title={The platonic representation hypothesis},
  author={Huh, Minyoung and Cheung, Brian and Wang, Tongzhou and Isola, Phillip},
  journal={arXiv preprint arXiv:2405.07987},
  year={2024}
}

@inproceedings{groger2026aristotelian,
  title={Revisiting the Platonic Representation Hypothesis: An Aristotelian View},
  author={Gr{\"o}ger, Fabian and Wen, Shuo and Brbi{\'c}, Maria},
  booktitle={International Conference on Machine Learning (ICML)},
  year={2026}
}

@inproceedings{li2025csa,
  title={Csa: Data-efficient mapping of unimodal features to multimodal features},
  author={Li, Po-han and Chinchali, Sandeep and others},
  booktitle={International Conference on Learning Representations},
  volume={2025},
  pages={99459--99478},
  year={2025}
}

@inproceedings{pan2025hubness,
  title={Hubness Reduction with Dual Bank Sinkhorn Normalization for Cross-Modal Retrieval},
  author={Pan, Zhengxin and others},
  booktitle={Proceedings of the 33rd ACM International Conference on Multimedia},
  year={2025}
}

@article{roschmann2026sotalign,
  title={SOTAlign: Semi-supervised alignment of unimodal vision and language models via optimal transport},
  author={Roschmann, Simon and Krzakala, Paul and Mazelet, Sonia and Bouniot, Quentin and Akata, Zeynep},
  journal={arXiv preprint arXiv:2602.23353},
  year={2026}
}

@article{gu2022keypoint,
  title={Keypoint-guided optimal transport with applications in heterogeneous domain adaptation},
  author={Gu, Xiang and Yang, Yucheng and Zeng, Wei and Sun, Jian and Xu, Zongben},
  journal={Advances in Neural Information Processing Systems},
  volume={35},
  pages={14972--14985},
  year={2022}
}

@article{tov2021designing,
  title={Designing an encoder for stylegan image manipulation},
  author={Tov, Omer and Alaluf, Yuval and Nitzan, Yotam and Patashnik, Or and Cohen-Or, Daniel},
  journal={ACM Transactions on graphics (TOG)},
  volume={40},
  number={4},
  pages={1--14},
  year={2021},
  publisher={ACM New York, NY, USA}
}

@inproceedings{mo2023sclip,
  title={{S-CLIP}: Semi-supervised Vision-Language Learning using Few Specialist Captions},
  author={Mo, Sangwoo and Kim, Minkyu and Lee, Kyungmin and Shin, Jinwoo},
  booktitle={Advances in Neural Information Processing Systems (NeurIPS)},
  year={2023}
}

\appendix

\section{Distillation details}
\label{app:distill}

\paragraph{Anchors as a distinguished chunk.}
The anchor pairs $\mathcal{A}$ are appended as an additional, distinguished chunk with plan $\pi^{(\mathcal{A})} = I$ (their correspondence being exactly known rather than inferred by transport).

\paragraph{Properties of the distillation objective.}
$\mathcal{L}^{(c)}_{X\to Y}$ and $\mathcal{L}^{(c)}_{Y\to X}$ (Equations~\ref{eq:distill-i2t}-\ref{eq:distill-t2i}) form a direction-asymmetric soft-target loss: the two directions use row- and column-normalized targets $\bar\pi$, $\bar\pi'$ respectively, which differ unless $\pi^{\star}$ is symmetric. $\mathcal{L}^{(c)}_{X\to Y}$ reduces to a standard InfoNCE loss with hard, one-hot targets when $\bar\pi_{i,:}$ is one-hot for every $i \in c_X$, and more generally distills the residual uncertainty the transport plan assigns to ambiguous matches (e.g. two visually similar images with semantically close captions). Each chunk (including the anchor chunk) is visited exactly once per epoch, in an order reshuffled every epoch, and contributes a single stochastic-gradient step from its own $\mathcal{L}^{(c)}$. The plan $\pi^{\star}$ is computed once before optimization begins and held fixed throughout training.

Because $f_\theta$ and $g_\psi$ operate directly on raw embeddings rather than on relative representations, the trained encoders no longer require access to the anchor set $\mathcal{A}$, or to any reference collection, at inference time. This is the precise sense in which the distillation step removes the transductive constraint introduced in Section~\ref{sec:ot}: the anchors are used only to \emph{supervise} training, not to compute the final representation of a query.

 \begin{figure}[h!]
    \centering
    \includegraphics[width=0.6\linewidth]{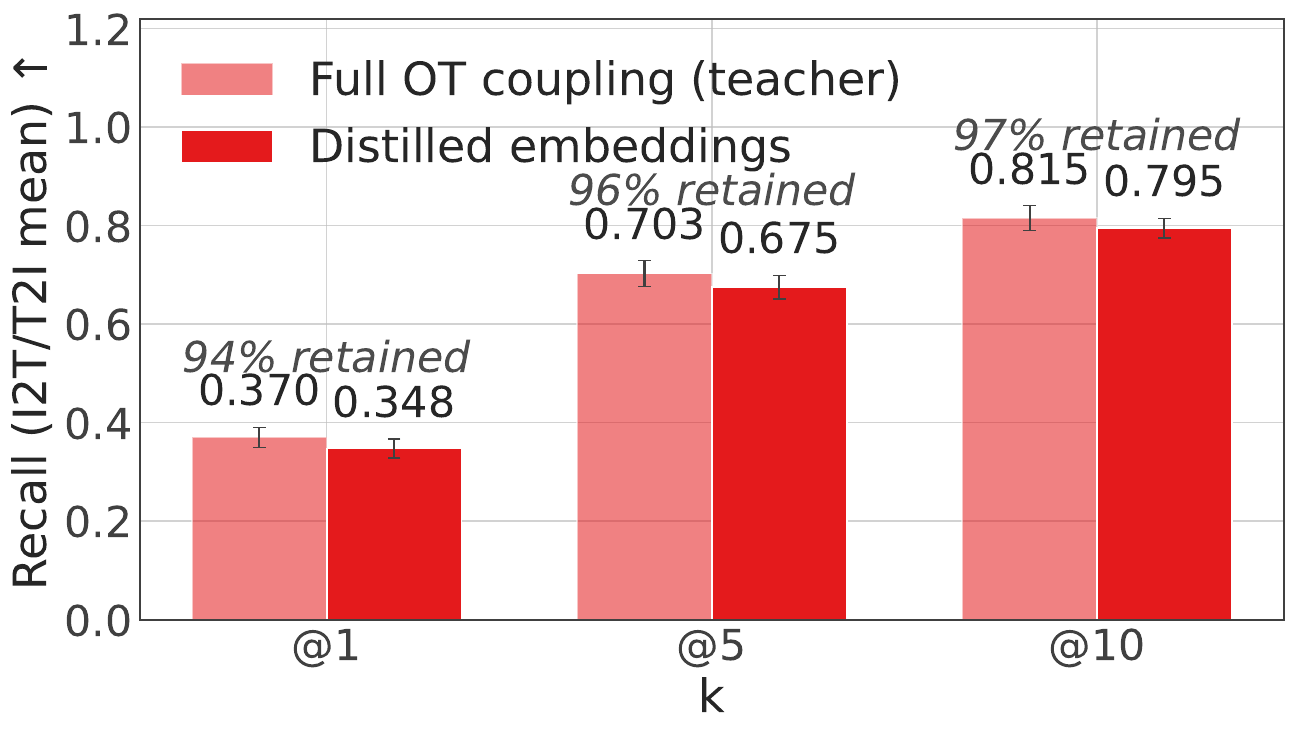}
    \caption{Recall comparison of the transductive OT teacher with how much is retained after distillation to the inductive encoders}
    \label{fig:distilledOT}
\end{figure}

\section{Theory details}
\label{app:theory}

\subsection{Spectral distortion of anchor-relative profiles}
\label{app:spectrum}
We clarify the spectrum-squaring effect referenced in Section~\ref{sec:relative} and its correction by anchor orthonormalization (Eq.~\ref{eq:orthonorm}).

\paragraph{Setup.} Let $x\sim p(x)$ with second-moment matrix $\Sigma_X=\mathbb E[xx^\top]=US U^\top$, $S=\mathrm{diag}(s_1,\dots,s_{d_x})$, assumed positive definite, so that $s_i>0$; for unit-normalized embeddings, $\sum_i s_i=\mathrm{tr}\,\Sigma_X=1$. As in Section~\ref{sec:relative}, the anchors $x_{a_1},\dots,x_{a_N}\stackrel{\mathrm{iid}}\sim p(x)$ are stacked as rows of $A_X\in\Reals^{N\times d_x}$, and $M_X= A_X^\top A_X/N$. Then $\mathbb E[M_X]=\Sigma_X$ and, by the strong law of large numbers, $M_X\to\Sigma_X$ almost surely as $N\to\infty$. Since $\Sigma_X$ is positive definite, almost surely $M_X$ is invertible for all sufficiently large $N$ (for every $N\ge d_x$ if $p$ has a density on the unit sphere).

\paragraph{Eigenvalue clipping.} In practice the eigenvalues of $M_X$ are clipped below at $\tau=10^{-6}$ times their mean, i.e.\ $\tau=10^{-6}/d_x$ for unit-normalized anchors, before taking powers. Write $\max(M_X,\tau I)$ for $M_X$ with its eigenvalues clipped at $\tau$. It is a spectral function of $M_X$ and therefore commutes with $M_X$, so the proof below goes through verbatim with $M_X^{1-\gamma}$ replaced by $M_X\max(M_X,\tau I)^{-\gamma}$. The two coincide whenever $\lambda_{\min}(M_X)\ge\tau$, which holds almost surely for large $N$ if $s_{\min}>\tau$. We therefore state the results without clipping.

\paragraph{Anchor orthonormalization at $\gamma=1$.} $A_X M_X^{-1/2}=\sqrt{N}\,A_X(A_X^\top A_X)^{-1/2}$ is $\sqrt{N}$ times the polar factor of $A_X$, its symmetric (L\"owdin) orthonormalization: its columns are orthonormal and, when $N>d_x$, its rows (the transformed anchors) form a tight frame of $\Reals^{d_x}$ rather than an orthonormal set. Consequently $\langle\phi_X^1(x),\phi_X^1(x')\rangle = N\,x^\top x'$ for all $x,x'$: $\phi_X^1/\sqrt{N}$ is an isometry of $\Reals^{d_x}$ into $\Reals^N$, exactly at every $N$ with $M_X$ invertible.

\begin{proposition}[Spectrum interpolation]
\label{prop:spectrum}
For $\gamma\in[0,2]$, let $\phi_X^\gamma(x) = A_X M_X^{-\gamma/2} x \in \Reals^N$ denote the profile induced by $\mathrm{sim}_\gamma$ of Eq.~\ref{eq:orthonorm}, and $\Gamma_X^\gamma=\mathbb E_x\big[\phi_X^\gamma(x)\phi_X^\gamma(x)^\top\big]$ its second-moment matrix at fixed anchors.
(i) For any anchors with $M_X$ invertible, $\Gamma_X^\gamma$ has exactly $d_x$ nonzero eigenvalues, counted with multiplicity, equal to those of $N\,\Sigma_X^{1/2}M_X^{1-\gamma}\Sigma_X^{1/2}$. In particular, for $\gamma=1$ they are exactly $\{N s_i\}_{i=1}^{d_x}$.
(ii) With iid anchors as above, as $N\to\infty$, the sorted nonzero eigenvalues of $\Gamma_X^\gamma/N$ converge almost surely to the sorted $\{s_i^{2-\gamma}\}_{i=1}^{d_x}$.
\end{proposition}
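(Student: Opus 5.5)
Write $B=A_XM_X^{-\gamma/2}\in\Reals^{N\times d_x}$, so that $\phi_X^\gamma(x)=Bx$. Since the anchors are fixed, linearity of expectation gives $\Gamma_X^\gamma=B\,\Sigma_X B^\top$. The whole argument rests on the elementary fact that $PQ$ and $QP$ have the same nonzero eigenvalues with multiplicity. I will apply it with $P=B\Sigma_X^{1/2}\in\Reals^{N\times d_x}$ and $Q=P^\top$. Then $\Gamma_X^\gamma=PP^\top$ has the same nonzero eigenvalues as
\[
P^\top P=\Sigma_X^{1/2}B^\top B\,\Sigma_X^{1/2}.
\]
Next I compute $B^\top B$. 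Both $M_X^{-\gamma/2}$ and $A_X^\top A_X=NM_X$ are functions of $M_X$, so they commute, and
\[
B^\top B=M_X^{-\gamma/2}A_X^\top A_X M_X^{-\gamma/2}=N M_X^{1-\gamma}.
\]
Hence $P^\top P=N\,\Sigma_X^{1/2}M_X^{1-\gamma}\Sigma_X^{1/2}$. This matrix is positive definite, because $\Sigma_X\succ0$ and $M_X\succ0$ when $M_X$ is invertible. So all $d_x$ of its eigenvalues are nonzero. This means $\Gamma_X^\gamma$ has exactly $d_x$ nonzero eigenvalues, and they are the ones claimed in (i). At $\gamma=1$ we have $M_X^0=I$, so the matrix is $N\Sigma_X$, with eigenvalues $\{Ns_i\}$. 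This also recovers the isometry remark above.

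For (ii), part (i) already identifies the nonzero eigenvalues of $\Gamma_X^\gamma/N$ with the eigenvalues of $K_N=\Sigma_X^{1/2}M_X^{1-\gamma}\Sigma_X^{1/2}$. As stated in the setup, $M_X\to\Sigma_X$ almost surely by the strong law of large numbers. On that event, $M_X$ eventually lies in a compact neighborhood of $\Sigma_X$ inside the cone of positive definite matrices. The map $M\mapsto M^{1-\gamma}$ is continuous on the positive definite cone, for example via the holomorphic functional calculus or continuity of spectral functions. Therefore $M_X^{1-\gamma}\to\Sigma_X^{1-\gamma}$ and $K_N\to\Sigma_X^{1/2}\Sigma_X^{1-\gamma}\Sigma_X^{1/2}=\Sigma_X^{2-\gamma}$ almost surely.

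The eigenvalues of $\Sigma_X^{2-\gamma}$ are $\{s_i^{2-\gamma}\}$. By Weyl's inequality, the sorted eigenvalues of symmetric matrices are $1$-Lipschitz in operator norm. So the sorted eigenvalues of $K_N$ converge almost surely to the sorted $\{s_i^{2-\gamma}\}$, which is (ii).

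The main point needing care is the continuity of the matrix power when the exponent $1-\gamma\in[-1,1]$ is negative, since negative powers are singular at the boundary of the cone. This is why I first confine $M_X$ to a neighborhood of $\Sigma_X$ bounded away from singular matrices, using $\lambda_{\min}(\Sigma_X)=s_{\min}>0$ together with Weyl's inequality, and only then invoke continuity. That same confinement also shows that $M_X$ is eventually invertible, so part (i) applies for all large $N$ almost surely. The clipping remark above covers the practical variant of the map.
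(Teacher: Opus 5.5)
Your proposal is correct and follows the paper's proof essentially step for step. Both use the factorization $\Gamma_X^\gamma=PP^\top$ with $P=A_XM_X^{-\gamma/2}\Sigma_X^{1/2}$, the identity $P^\top P=N\,\Sigma_X^{1/2}M_X^{1-\gamma}\Sigma_X^{1/2}$ (from $M_X^{-\gamma/2}$ commuting with $A_X^\top A_X=NM_X$), and for (ii) the strong law, continuity of $M\mapsto M^{1-\gamma}$ on the positive definite cone, and Weyl's inequality; your confinement of $M_X$ to a neighborhood of $\Sigma_X$ away from singular matrices only makes explicit what the paper leaves implicit for negative exponents.
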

\begin{proof}
(i) By definition of $\Gamma_X^\gamma$, the anchors are fixed and the expectation is over $x$ only, so
\[
\Gamma_X^\gamma = A_X M_X^{-\gamma/2}\,\mathbb E[xx^\top]\,M_X^{-\gamma/2} A_X^\top = PP^\top, \qquad P = A_X M_X^{-\gamma/2}\Sigma_X^{1/2}\in\Reals^{N\times d_x}.
\]
The matrices $PP^\top\in\Reals^{N\times N}$ and $P^\top P\in\Reals^{d_x\times d_x}$ share their nonzero eigenvalues. Using $A_X^\top A_X = N M_X$ and the fact that $M_X^{-\gamma/2}$ commutes with $M_X$,
\[
P^\top P = \Sigma_X^{1/2} M_X^{-\gamma/2}(A_X^\top A_X) M_X^{-\gamma/2}\Sigma_X^{1/2} = N\,\Sigma_X^{1/2} M_X^{1-\gamma} \Sigma_X^{1/2},
\]
which is positive definite as a congruence of the positive definite $M_X^{1-\gamma}$ by the invertible $\Sigma_X^{1/2}$, so all its $d_x$ eigenvalues are nonzero. This proves the identity. For $\gamma=1$, $M_X^{0}=I$ and $P^\top P = N\Sigma_X$, whose eigenvalues are $\{N s_i\}$.

(ii) Almost surely, $M_X$ is positive definite for all large $N$ and $M_X\to\Sigma_X$. The map $M\mapsto M^{1-\gamma}$ is continuous on positive definite matrices for any real exponent, so $\Sigma_X^{1/2}M_X^{1-\gamma}\Sigma_X^{1/2}\to\Sigma_X^{2-\gamma}$ almost surely. By Weyl's inequality, $|\lambda_i(B)-\lambda_i(B')|\le\|B-B'\|_2$ for the sorted eigenvalues of symmetric matrices $B,B'$. Hence the sorted eigenvalues of $\Gamma_X^\gamma/N$ converge to those of $\Sigma_X^{2-\gamma}=US^{2-\gamma}U^\top$, namely the sorted $\{s_i^{2-\gamma}\}_{i=1}^{d_x}$.
\end{proof}
Proposition~\ref{prop:spectrum} recovers the reference points used above: $\gamma=0$ squares the spectrum ($s_i^2$), matching plain cosine similarity. $\gamma=1$ preserves its shape ($s_i^1=s_i$), exactly at every $N$, the exact isometry described above. $\gamma=2$ flattens it entirely in the limit ($s_i^0=1$), equivalent to fully whitening $\Sigma_X$. Only the $\gamma=1$ case is exact at every finite $N$ (Statement (i)). For $\gamma\ne 1$, Statement (ii) is an asymptotic approximation, accurate for the leading eigenvalues once $N$ is large relative to $d_x$ 

\begin{remark}[Effective dimensionality]
\label{rem:pr}
The participation ratio $\mathrm{PR}(\{\lambda_i\})=(\sum_i\lambda_i)^2/\sum_i\lambda_i^2$ of the limiting spectrum $\{s_i^{2-\gamma}\}$ increases monotonically with $\gamma\in[0,2]$.
\end{remark}

\subsection{Relation to CSLS.}
\label{app:csls}

CSLS \citep{conneau2017word}, another one-step hubness correction used in retrieval, shares the
additive structure of one full Sinkhorn iteration. Writing $u_i = \varepsilon \log a_i$ and
$v_j = \varepsilon \log b_j$, taking $\varepsilon \log(\cdot)$ of the factorized coupling $\pi_{ij}
= a_i K_{ij} b_j$ with $K_{ij} = \exp(\mathrm{sim}_{ij}/\varepsilon)$ gives, at any iteration $t$,
\begin{equation}
  \varepsilon \log \pi^{(t)}_{ij} = \mathrm{sim}_{ij} + v_j^{(t)} + u_i^{(t)}.
  \label{eq:sinkhorn-additive}
\end{equation}
This matches the shape of $\mathrm{CSLS}(x_i, y_j) = 2\cos(x_i, y_j) - r_T(x_i) - r_S(y_j)$: a raw
similarity corrected by one additive term per side, $r_S$ and $r_T$ being the average similarity to
the $K$ nearest neighbors in the opposite space. We make this resemblance precise by writing out one full Sinkhorn iteration both potentials updated in parallel from $a^{(0)} \equiv b^{(0)} \equiv 1$ (absorbing the constant $\log \mu_i, \log \nu_j$ terms):
\begin{equation}
  v_j^{(1)} = -\varepsilon \,\mathrm{LSE}_i\!\left(\mathrm{sim}_{ij}/\varepsilon\right), \qquad
  u_i^{(1)} = -\varepsilon \,\mathrm{LSE}_j\!\left(\mathrm{sim}_{ij}/\varepsilon\right),
  \label{eq:potentials-lse}
\end{equation}
with $\mathrm{LSE}_i(z_i) = \log \sum_i \exp(z_i)$. Substituting into
Equation~\ref{eq:sinkhorn-additive} at $t=1$ gives
\[
  \varepsilon \log \pi^{(1)}_{ij} = \mathrm{sim}_{ij}
    - \varepsilon \,\mathrm{LSE}_j(\mathrm{sim}_{ij}/\varepsilon)
    - \varepsilon \,\mathrm{LSE}_i(\mathrm{sim}_{ij}/\varepsilon),
\]

Fix $j$, write $z_i = \mathrm{sim}_{ij}$ and let $M_j = \max_i z_i$. Factoring $\exp(M_j/\varepsilon)$
out of the sum inside the $\mathrm{LSE}$,
\[
  \varepsilon \,\mathrm{LSE}_i(z_i/\varepsilon) = M_j + \varepsilon \log S_j(\varepsilon), \qquad
  S_j(\varepsilon) = \sum_i \exp\!\left(\frac{z_i - M_j}{\varepsilon}\right) \in [1,n].
\]
Since $\log S_j(\varepsilon)$ is bounded uniformly in $\varepsilon$, $\varepsilon\log S_j(\varepsilon) \to
0$ and $\varepsilon\,\mathrm{LSE}_i(z_i/\varepsilon) \to M_j$ as $\varepsilon\to 0$. The same argument
applies symmetrically to $u_i^{(1)}$, giving $v_j^{(1)} \to -\max_i \mathrm{sim}_{ij}$ and $u_i^{(1)}
\to -\max_j \mathrm{sim}_{ij}$. Substituting back into Equation~\ref{eq:sinkhorn-additive} at $t=1$,

\[
  \varepsilon \log \pi^{(1)}_{ij} = \underbrace{\mathrm{sim}_{ij}}_{\cos(x_i,y_j)}
  \underbrace{- \text{max}_{j'} \mathrm{sim}_{ij'} - \text{max}_{i'} \mathrm{sim}_{i'j}}_{-r_T(x_i) -r_S(y_j)}
  - \underbrace{\big(\varepsilon\log S_i(\varepsilon) + \varepsilon\log S_j(\varepsilon)\big)}_{\to~0 \text{ as } \varepsilon\to 0}
\]
exactly the CSLS shape with coefficient $1$ instead of $2$, $K=1$ and an additive
$f(\varepsilon) = -\varepsilon\log S_i(\varepsilon) - \varepsilon\log S_j(\varepsilon)$ function that vanishes as $\varepsilon \to 0$.

\section{Additional experiments}
\label{app:experiments}

\subsection{Participation ratio}
\label{app:pr}

\begin{figure}[h!]
    \centering
    \begin{subfigure}[t]{0.32\linewidth}
        \centering
        \includegraphics[width=\linewidth]{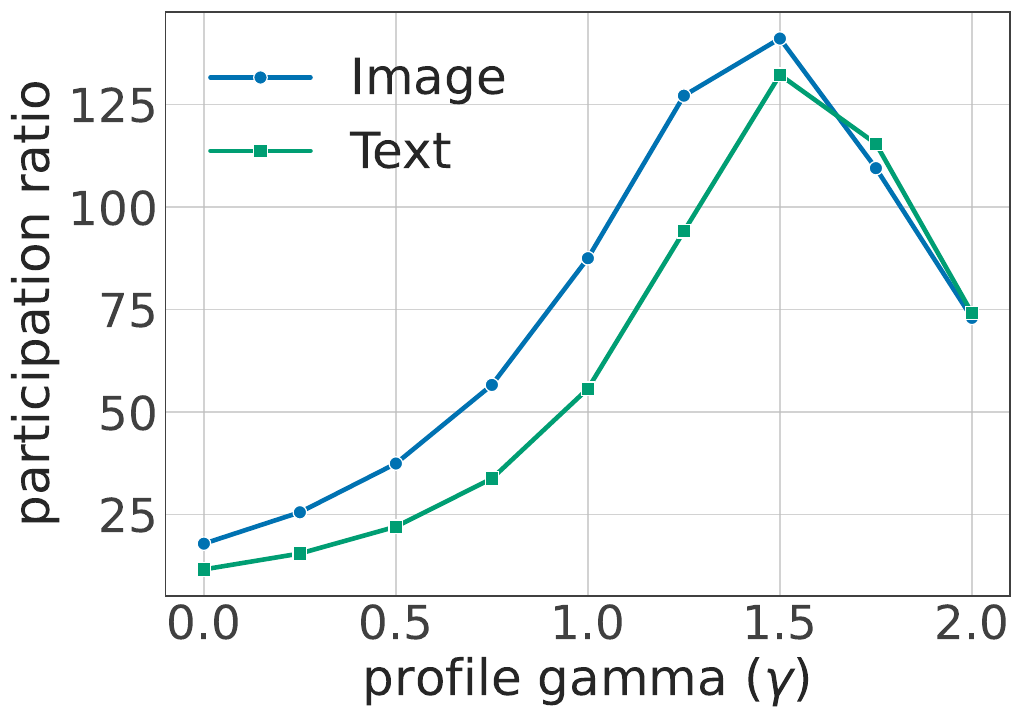}
        \caption{$N=500$}
        \label{fig:prN500}
    \end{subfigure}
    \hfill
    \begin{subfigure}[t]{0.32\linewidth}
        \centering
        \includegraphics[width=\linewidth]{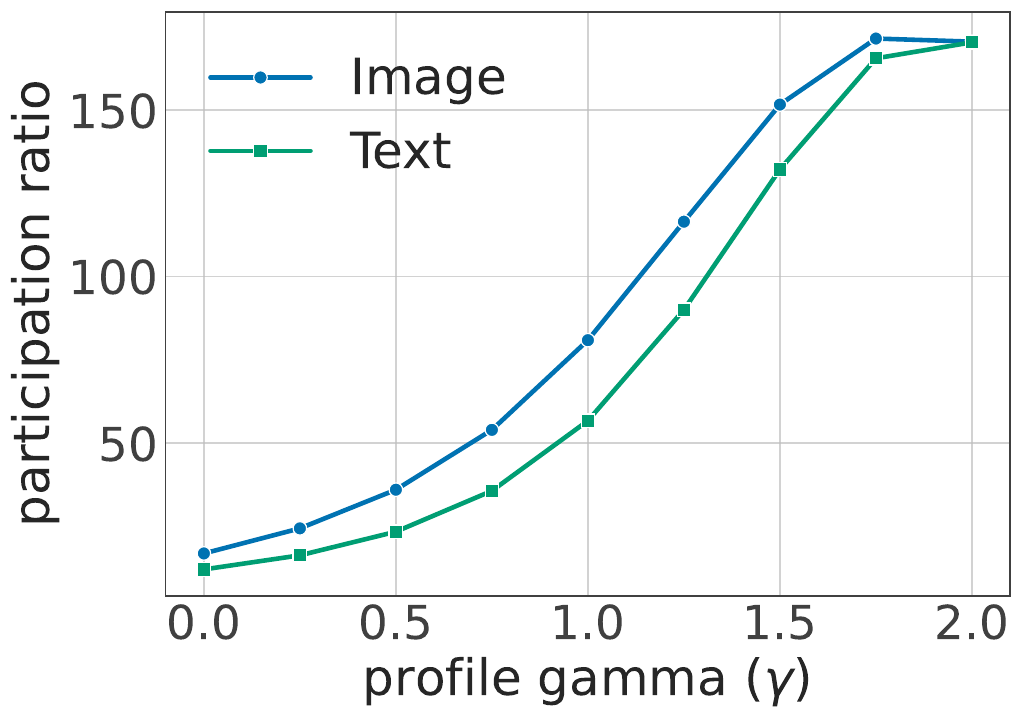}
        \caption{$N=2000$}
        \label{fig:prN2000}
    \end{subfigure}
    \hfill
    \begin{subfigure}[t]{0.32\linewidth}
        \centering
        \includegraphics[width=\linewidth]{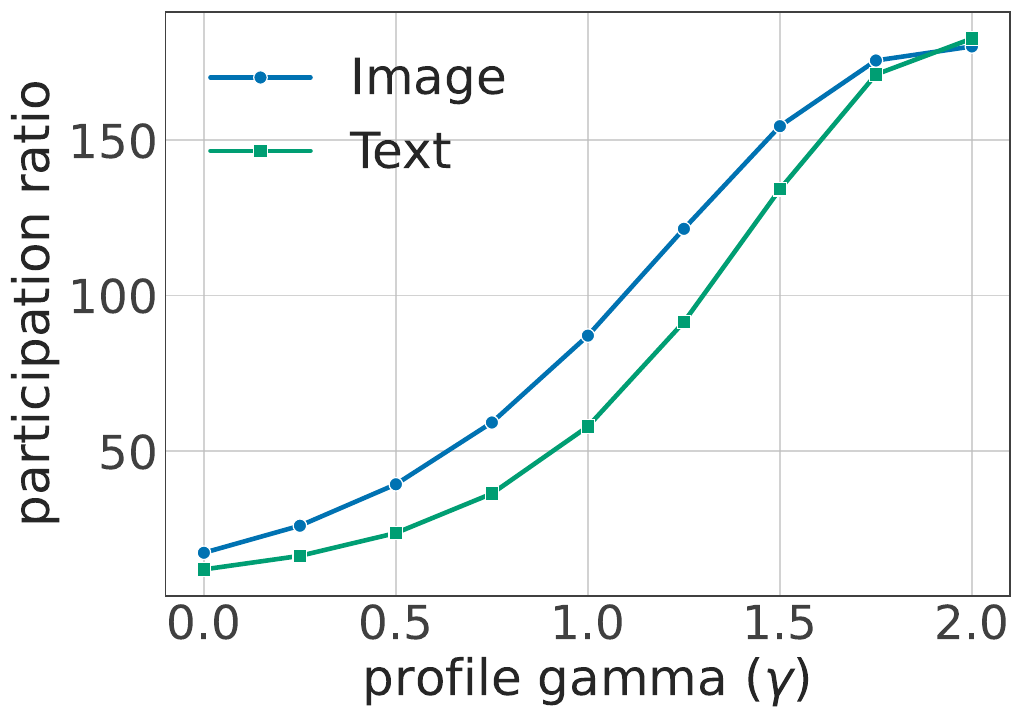}
        \caption{$N=4000$}
        \label{fig:prN4000}
    \end{subfigure}
    \caption{Participation ratio as a function of $\gamma$ evaluated on Flickr30k for three anchor budgets $N$.}
    \label{fig:pr}
\end{figure}

 Figure~\ref{fig:pr} reports the participation ratio of the anchor-relative profile on Flickr30k, for both modalities, as $\gamma$ is varied from $0$ to $2$, at three anchor budgets $N\in\{500,2000,4000\}$. At $N=2000$ and $N=4000$, $\mathrm{PR}$ increases monotonically with $\gamma$ for both modalities, as predicted by the Remark \ref{rem:pr}, saturating near its asymptotic ceiling $d_x$ as $\gamma\to2$. At $N=500$, however, the curve peaks around $\gamma\approx1.5$ and drops sharply toward $\gamma=2$. This is consistent with a finite-$N$ estimation bias rather than a failure of the underlying result: the negative exponent on $M_X$ at $\gamma>1$ disproportionately amplifies its smallest eigenvalues, which are also the ones $N$ anchors estimate least reliably (Appendix~\ref{app:spectrum}). At $N=500$ against $d_x\approx384$, a handful of poorly-estimated directions can dominate the empirical spectrum once inverted, depressing $\mathrm{PR}$ well below its asymptotic value. As $N$ grows, $M_X$ estimates $\Sigma_X$ more accurately in exactly these tail directions, and the curve converges to the monotonic prediction of Proposition~\ref{prop:spectrum}. This is precisely why we restrict $\gamma$ to $[0,1]$ in practice (Appendix~\ref{app:hyperparam}): at our target budget of a few hundred anchors, $\gamma>1$ risks amplifying anchor-estimation noise rather than correcting a real bias, well before reaching its asymptotic behavior.

\subsection{Comparison to unsupervised methods}
\label{sec:exp-unsup}

 Figure~\ref{fig:unsup} evaluates the zero-pair setting: Gromov-Wasserstein, Wasserstein Procrustes and Iterative Closest Point align the two point clouds using only their intra-modal geometry, with no anchors at all. All three collapse to near-random recall. This gap is not a failure of the matching algorithms themselves, which are well established. However, it indicates that the intra-modal geometries of our frozen image and text encoders are not yet close enough to be matched on structure alone. Unlike two instances of the same modality (e.g.\ word embeddings of different languages), where such zero-pair methods are known to succeed, image and text encoders trained independently offer not enough shared geometric signal for these methods to exploit. A handful of paired anchors is therefore a necessity in this regime.
 
 \begin{figure}[h!]
    \centering
    \includegraphics[width=0.99\linewidth]{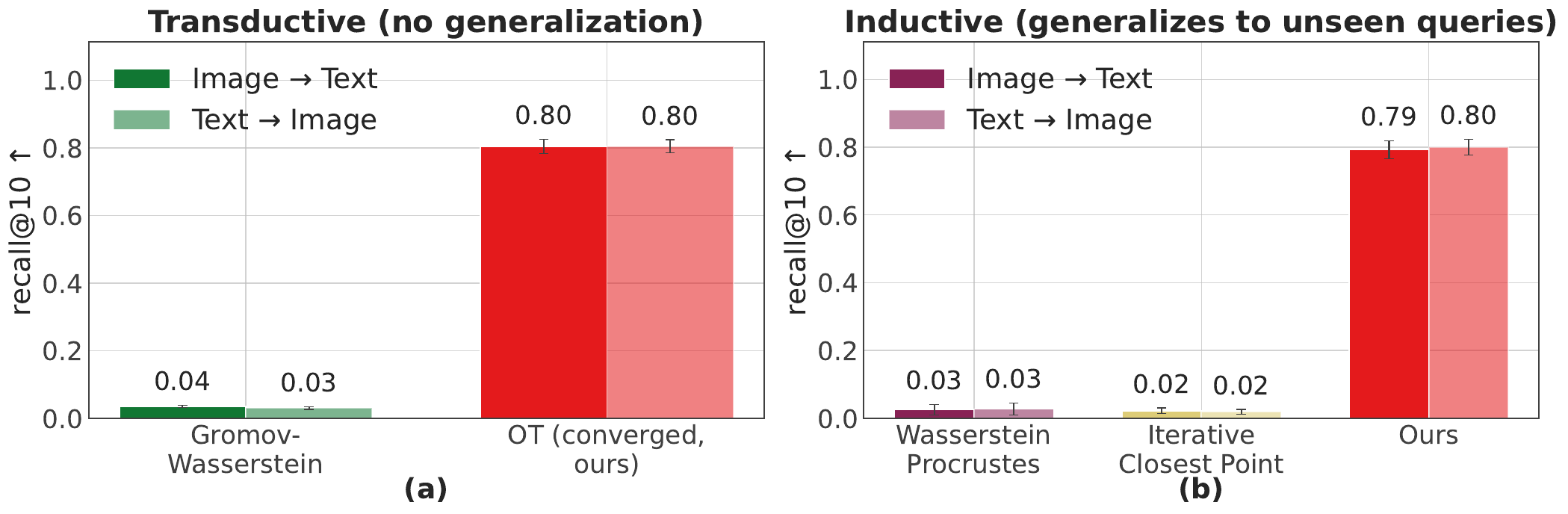}
    \caption{Recall comparison of our method to existing unsupervised baselines on Flickr30k with the same usual setting}
    \label{fig:unsup}
\end{figure}

\subsection{Choice of the optimal transport plan}
\label{app:transport_plan}

We now focus on the cost used to build our coupling, and on how
effectively it exploits the information available beyond the $N$
anchors. Sections~\ref{sec:exp-BridgeCost} and~\ref{sec:KPG-RL} respectively compare our anchor-relative representation against the bridge cost of \citet{gholamzadeh2025model}, an alternative construction that routes cross-modal distance through the anchors, and the cost of \citet{gu2022keypoint}, another construction that represents each point by a softmax-normalized distribution over its distances to the anchors, compared across modalities via Jensen-Shannon divergence. Then the section~\ref{sec:exp-FGW} studies the effect of adding a structural constraint on top of our coupling through a Fused Gromov-Wasserstein \citep{vayer2019fgw}
formulation, which interpolates between our anchor-relative cost and a
Gromov-Wasserstein penalty on the intra-modal geometry.

\subsubsection{Bridge-cost}
\label{sec:exp-BridgeCost}

\begin{figure}[h]
    \centering
    \includegraphics[width=0.99\linewidth]{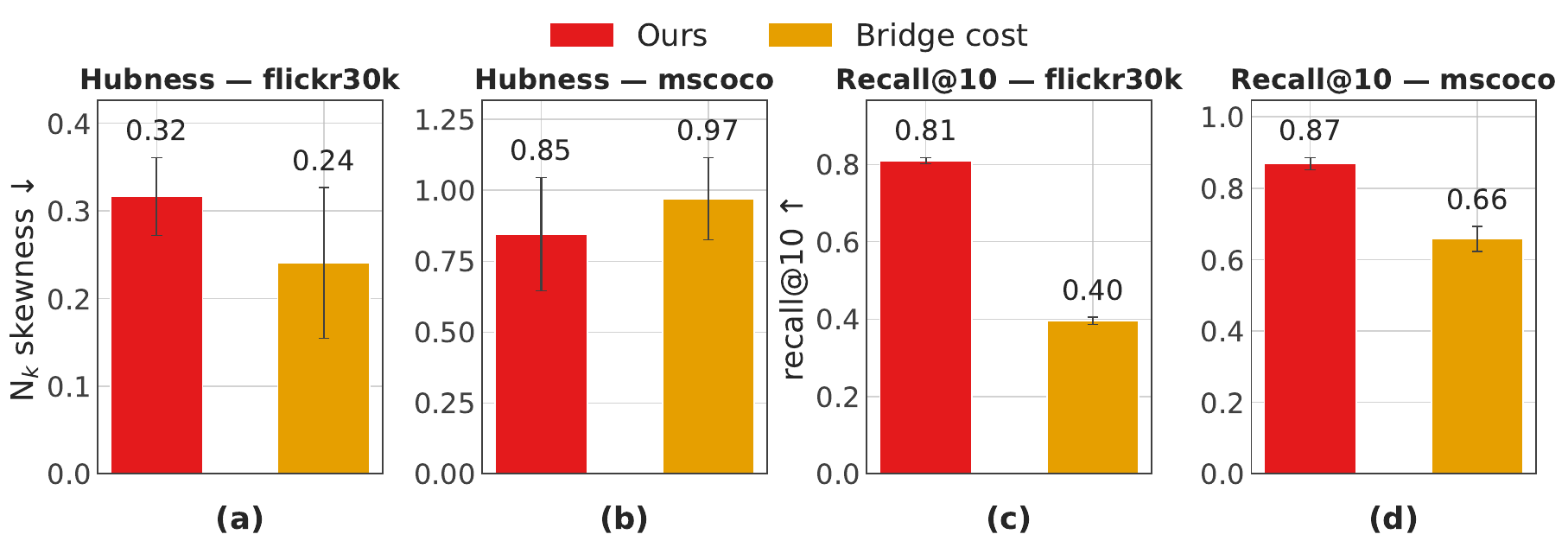}
    \caption{Hubness and recall@10 comparison on Flickr30k and MS-COCO for the Bridge cost against our method. The $N_{10}$ skewness gives similar results while the recall@10 drops considerably when using the Bridge cost suggesting that our method benefits from representing a point from its similarity to all anchors and not only the closest one.}
    \label{fig:bridge-cost}
\end{figure}

As an alternative to our anchor-relative profile, we consider the
bridge cost of \citet{gholamzadeh2025model}, which also builds a
cross-modal transport cost solely from distances to the $N$ anchors, but
combines them differently. For a query $x$ in modality $X$ and $y$ in
modality $Y$, the bridge cost is
\[
C_{\text{bridge}}(x, y) = \min_{i \in \{1, \dots, N\}} \; d_X(x, x_{a_i}) + d_Y(y_{a_i}, y),
\]
the length of the shortest two-hop path from $x$ to $y$ routed through a
single anchor $x_{a_i}$. This differs from our profile representation, which
embeds every point into the full $N$-dimensional coordinate system given
by its distance to \emph{every} anchor and compares two such coordinate
vectors directly (e.g.\ by cosine distance), rather than selecting a
single best anchor per pair. We plug this cost into the same Sinkhorn
coupling used throughout the paper, changing nothing else, and compare the resulting retrieval performance and hubness to our own cost in Figure~\ref{fig:bridge-cost}.

\subsubsection{KPG-RL cost}
\label{sec:KPG-RL}

\begin{figure}[h]
    \centering
    \includegraphics[width=0.99\linewidth]{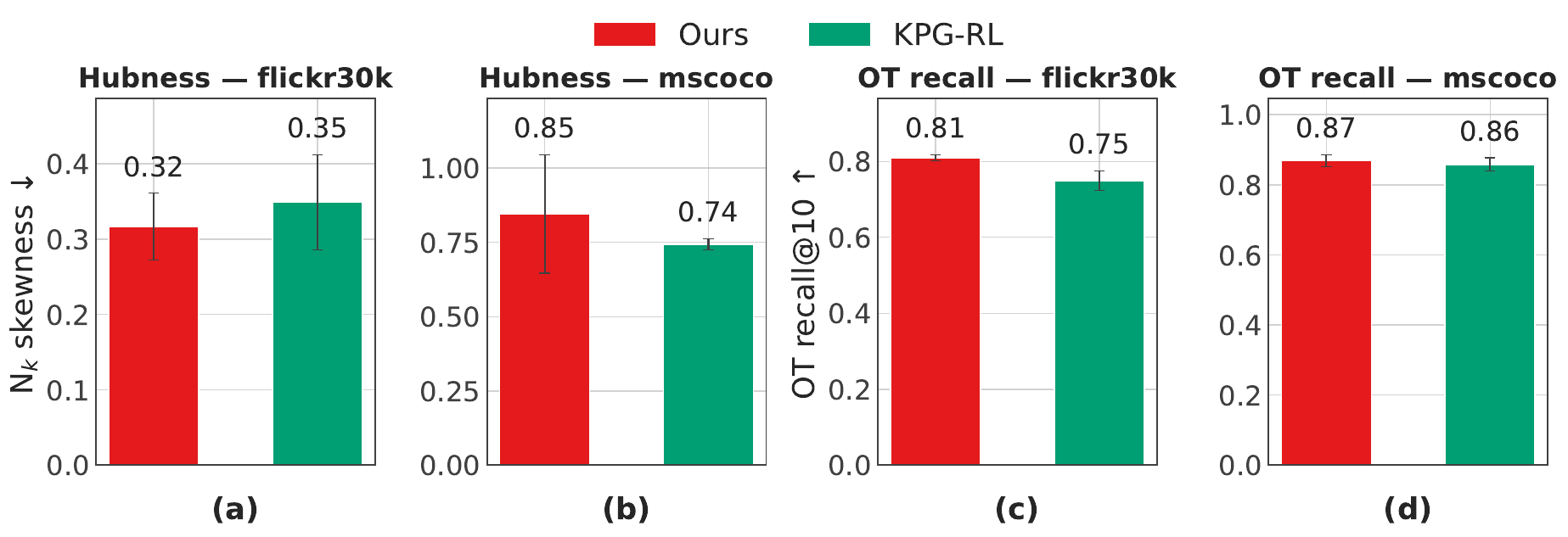}
    \caption{Hubness and recall@10 comparison on Flickr30k and MS-COCO for the KPG-RL cost against our method. Both $N_{10}$ skewness and recall@10 give similar results with our method getting slightly better performances on the retrieval task}
    \label{fig:kpg-rl}
\end{figure}

As a further alternative, we consider the relation profile of \citet{gu2022keypoint}, which also builds a cross-modal transport cost from distances to the $N$ anchors, but combines them differently. For a point $z$ in either modality, the relation score is the softmax-normalized distribution over its distances to the anchors,
\[
R_Z(z)_k = \frac{\exp(-d_Z(z, a_k)/\tau)}{\sum_{k'=1}^{N} \exp(-d_Z(z, a_{k'})/\tau)}, \qquad k \in \{1, \dots, N\},
\]
and the cross-modal cost compares the two distributions by Jensen-Shannon divergence, $C_{\text{kpg}}(x,y) = \mathrm{JS}\big(R_X(x) \,\|\, R_Y(y)\big)$. This differs from our profile representation in two respects: the anchor coordinates are pre-normalized into a probability distribution rather than compared as a raw similarity vector, and the two profiles are compared by a divergence between distributions rather than a similarity. \citet{gu2022keypoint} was designed for a different task and a far smaller anchor budget: heterogeneous domain adaptation between two feature extractors on the same visual data, guided by only $1$ to $3$ keypoints per class, rather than cross-modal retrieval with a few hundred paired anchors. It further imposes a mask constraining the coupling to match each anchor pair exactly within the transport problem itself, a hard constraint our transport plan does not impose. We plug the cost $C_{\text{kpg}}$, without the mask, into the same Sinkhorn coupling, doing the same analysis as in section \ref{sec:exp-BridgeCost} and results are reported on Figure~\ref{fig:kpg-rl}.

\subsubsection{Fused-Gromov-Wasserstein}
\label{sec:exp-FGW}

Recent critiques of the Platonic Representation Hypothesis \citep{huh2024platonic} argue that the predicted convergence holds only partially in practice, and that the two modalities may have fundamentally different intra-modal geometries \citep{groger2026aristotelian}. Our Fused Gromov-Wasserstein ablation offers a targeted piece of evidence for this view within our own setting. We interpolate between the structural and feature-cost terms via
$$
    \mathcal{L}_{\text{FGW}}(\alpha) = (1-\alpha)\, \mathcal{L}_{\text{EGW}} + \alpha\, \mathcal{L}_{\text{EOT-anchors}}, \qquad \alpha \in [0,1],
$$
where $\mathcal{L}_{\text{EGW}}$ is the entropic Gromov-Wasserstein term matching intra-modal pairwise distances, and $\mathcal{L}_{\text{EOT-anchors}}$ is our anchor-relative feature-cost objective. The structural term never improves retrieval (Figure~\ref{fig:FGW}): recall increases with $\alpha$ and is highest at or near $\alpha=1$, i.e.\ when the structural term is dropped entirely and the coupling reduces exactly to our feature-cost OT. This suggests that the intra-modal geometries of images and text are not shared closely enough to be exploited through structural alignment alone.

\begin{figure}[h]
    \centering
    \includegraphics[width=0.45\linewidth]{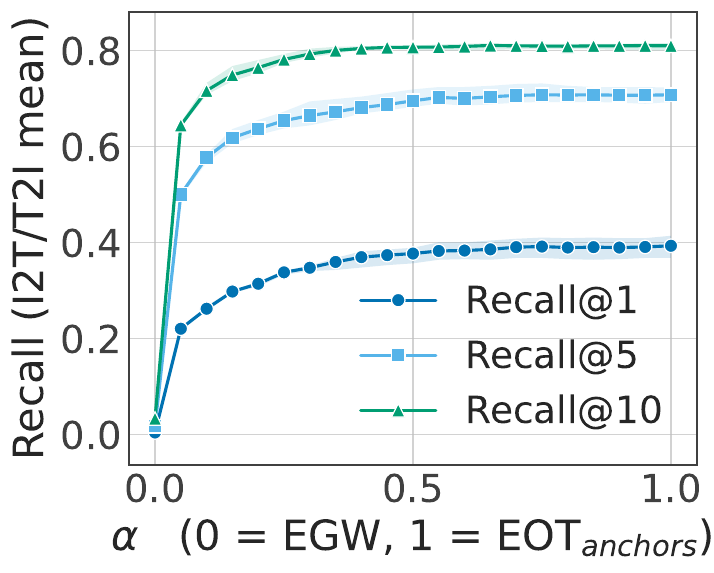}
    \caption{Recall@k as a function of the weight $\alpha$ in the FGW setting}
    \label{fig:FGW}
\end{figure}

\subsection{Qualitative cross-modal generation}
\label{sec:exp-gen}

To probe whether the shared embedding space supports generation and not
only retrieval, we reproduce the ``(almost-) text-free'' generation
protocol of \citet{yacobi2025sue} on
caption-FFHQ. A pretrained StyleGAN2 generator and its GAN-inversion
encoder (e4e, \citealp{tov2021designing}) are kept entirely fixed; we
train only a lightweight MLP \emph{converter} that maps a universal
image embedding $f_\theta(x)$ to the corresponding StyleGAN2 latent
code, using images only. At inference, a text query is instead
projected by $g_\psi$ into the same shared space and passed through
this same, image-only-trained converter and then to the GAN decoder. Generation from text alone is possible only because the coupling that aligns $f_\theta$
and $g_\psi$ makes a text embedding indistinguishable, to the converter,
from an image embedding.
\begin{figure}[h]
    \centering
    \includegraphics[width=0.9\linewidth]{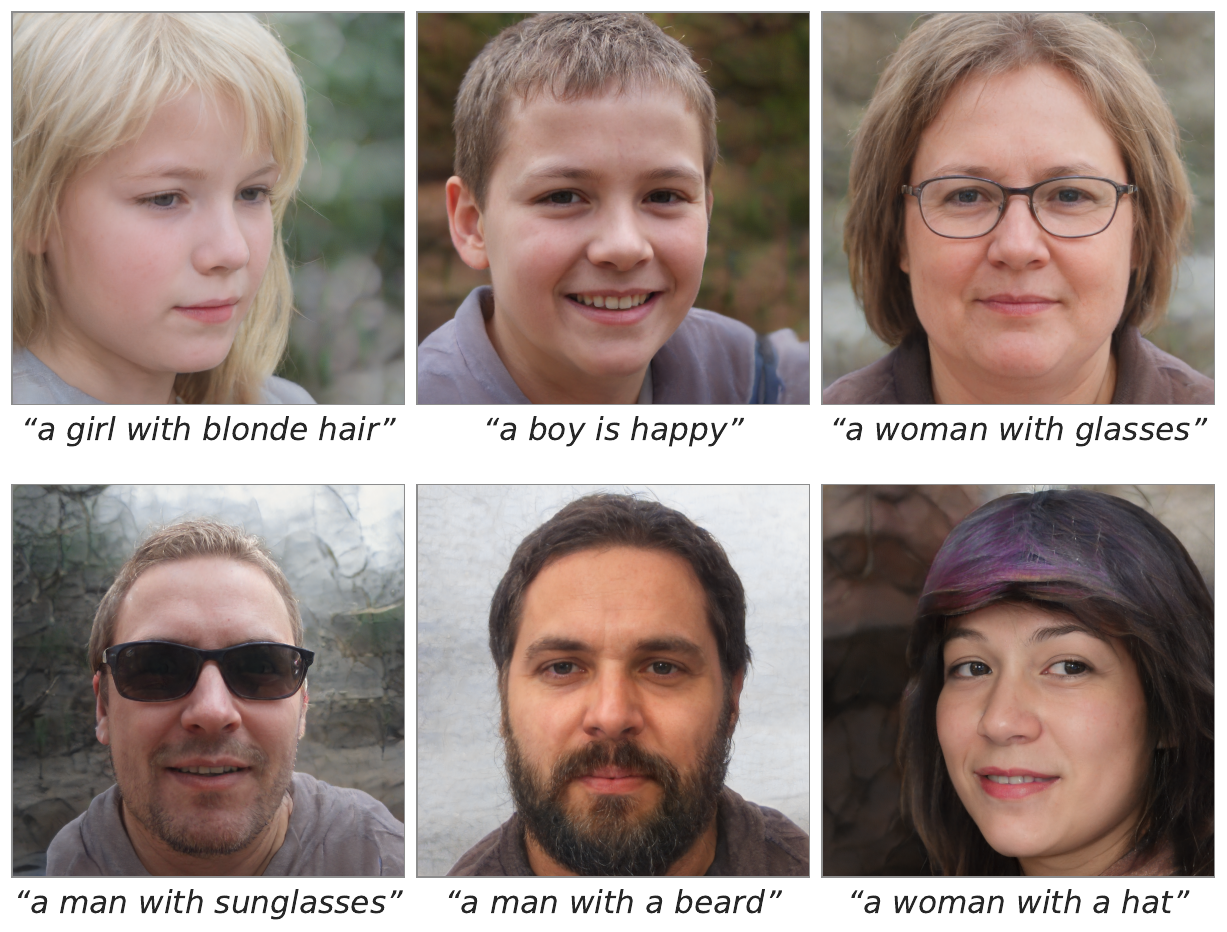}
    \caption{Examples of images generated from a custom caption following \citet{yacobi2025sue}'s generation protocol}
    \label{fig:gen}
\end{figure}

Generated faces (Figure~\ref{fig:gen}) are visually coherent and capture the coarse semantic content of the caption (hair color, expression, apparent age, glasses, facial hair), but fine or spatially localized attributes are imperfect:
captions mentioning a hat rarely yield a recognizable texture, though
the decoder consistently adds a silhouette above the head.

\subsection{Zero-shot transfer}

\subsubsection{Zero-shot classification}
\label{sec:exp-classif}

Figure~\ref{fig:classif} shows examples for the zero-shot classification experiment of Section~\ref{sec:experiments} (Table~\ref{tab:zeroshot}).

\begin{figure}[h!]
    \centering
    \includegraphics[width=0.97\linewidth]{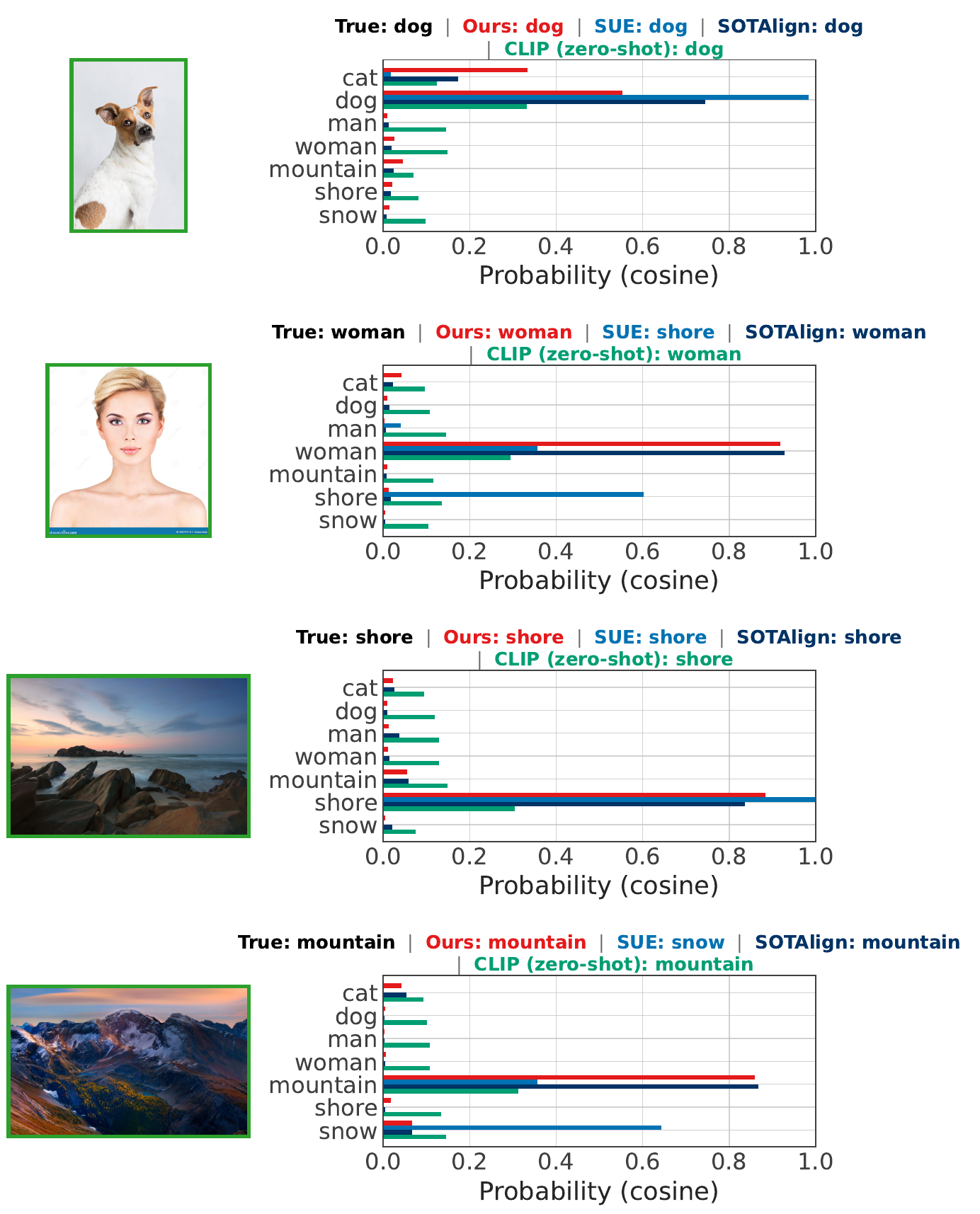}
    \caption{Examples of zero-shot classification on ImageNet comparing our method to SUE, SOTAlign (all trained on Flickr30k) and a pretrained CLIP checkpoint}
    \label{fig:classif}
\end{figure}

\clearpage
\subsubsection{Zero-shot retrieval}
\label{sec:zero_shot_retrieval}

This experiment tests whether the alignment learned by our method transfers to a new data distribution. We fit each method on $\sim 29$k
unpaired sampled from MS-COCO and $N=500$ anchors and then evaluate
recall@10 directly on Flickr30k, a dataset never seen during training or
hyperparameter selection. No further fine-tuning, calibration, or
optimization is performed on Flickr30k: the frozen unimodal encoders and
the learned alignment heads (fitted on MS-COCO) are directly applied to encode its images and captions, and retrieval is measured on a held-out query set. This is a substantially harder test than the standard held-out split, since MS-COCO and Flickr30k differ in image content, caption style, length and vocabulary.
As expected, all three methods degrade under this distribution shift but the recall remains comparable to that of Table \ref{tab:e1} (3/4 of it is kept), meaning that the alignment learned on a different distribution can be transferred to a new one. Our method remains the strongest in absolute terms in the zero-shot setting ($59.7\%$ against $56.7\%$ and $24.3\%$) , preserving the same ranking observed in-distribution ($79.3\%$ against $75.9\%$ and $37.8\%$). 

\begin{figure}[h!]
    \centering
    \includegraphics[width=0.42\linewidth]{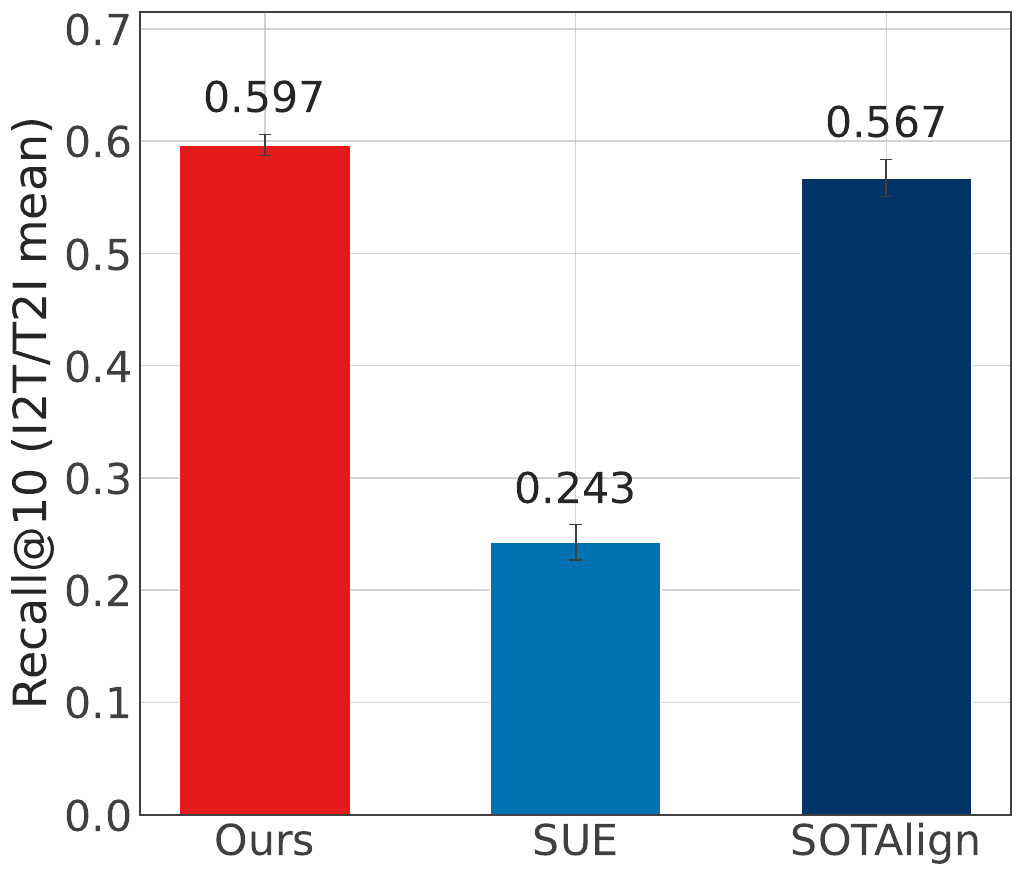}
    \caption{Comparison of our method, SUE and SOTAlign on zero-shot retrieval on Flickr30k after being trained on $30k$ samples from MS-COCO}
    \label{fig:zero_shot_retrieval}
\end{figure}

\subsection{Retrieval examples}
\label{app:retrieval_ex}

\begin{figure}[h]
    \centering
    \includegraphics[width=0.95\linewidth]{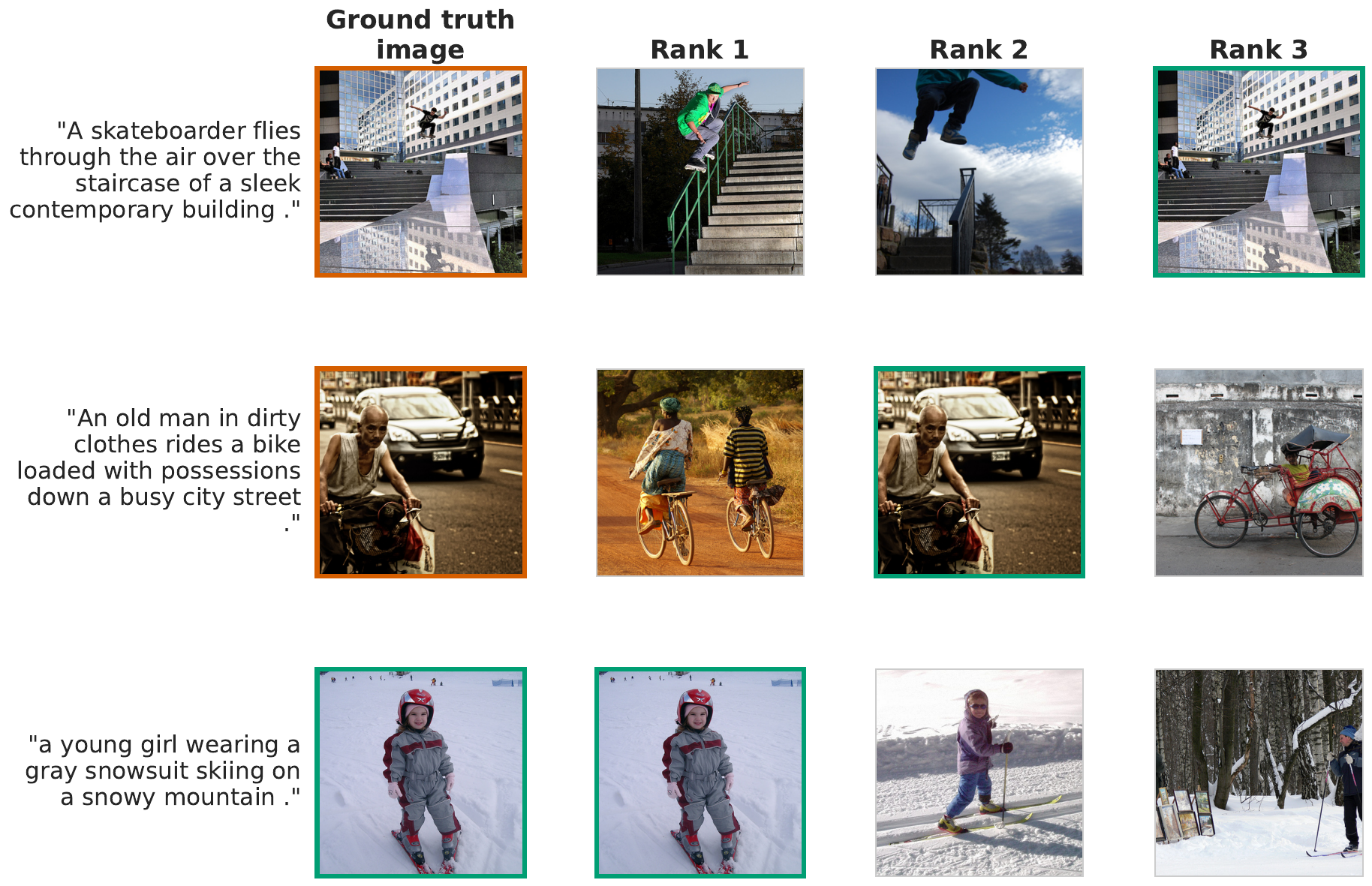}
    \caption{Text to Image retrieval examples on the dataset Flickr30k}
    \label{fig:retrieval_flick_t2i}
\end{figure}

\begin{figure}[h!]
    \centering
    \includegraphics[width=0.95\linewidth]{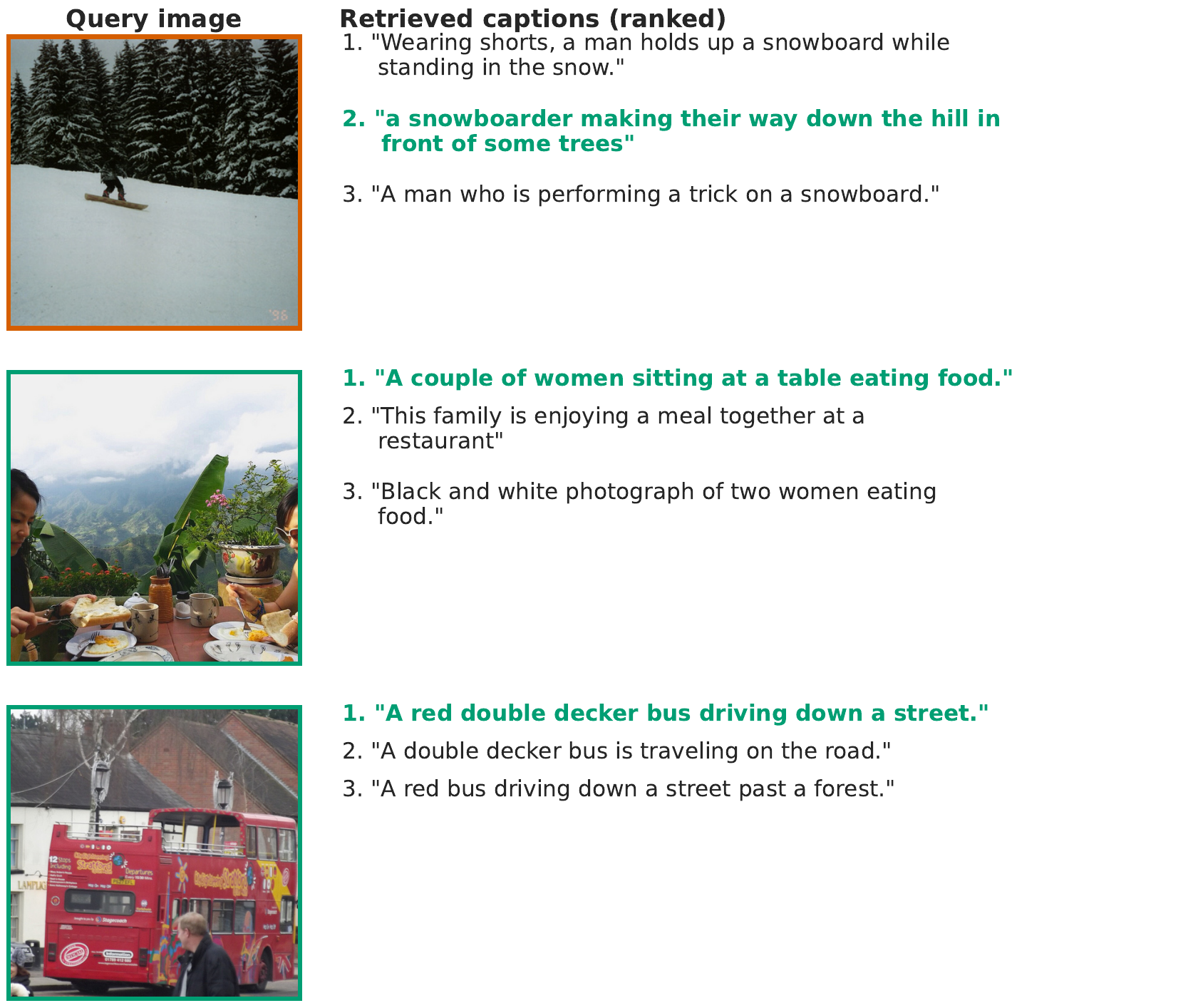}
    \caption{Image to Text retrieval examples on the dataset MS-COCO}
    \label{fig:retrieval_coco_i2t}
\end{figure}

\begin{figure}[h]
    \centering
    \includegraphics[width=0.95\linewidth]{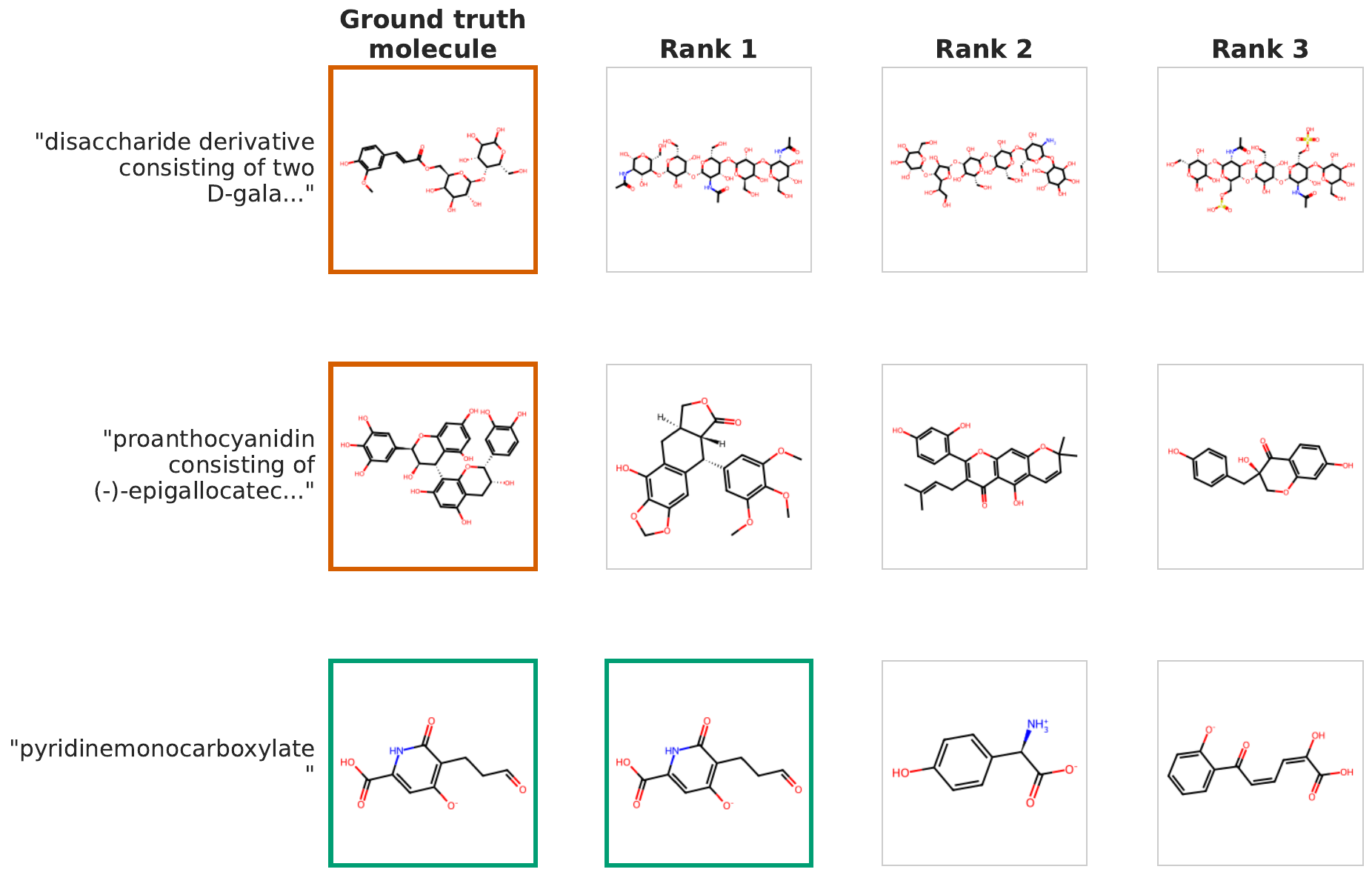}
    \caption{Text to Molecule retrieval examples on the dataset ChEBI-20}
    \label{fig:retrieval_ChEBI_t2m}
\end{figure}

\clearpage

\subsection{Complement}

\begin{table}[h!]
\caption{Complement of Table \ref{tab:e1} with std over 5 seeds for Flickr30k.}
\label{tab:e1_flickr}
\centering
\footnotesize
\setlength{\tabcolsep}{4.5pt}
\begin{tabular}{lcccccc}
\toprule
& \multicolumn{3}{c}{\textbf{\textit{Image $\rightarrow$ Text}}}
& \multicolumn{3}{c}{\textbf{\textit{Text $\rightarrow$ Image}}} \\
\cmidrule(lr){2-4}\cmidrule(lr){5-7}
\rowcolor{gray!6}
Method & R@1 & R@5 & R@10 & R@1 & R@5 & R@10 \\
\midrule
Contrastive
& 22.05 (2.21) & 49.10 (2.48) & 60.45 (2.99)
& 22.30 (2.76) & 48.05 (3.07) & 59.60 (3.20) \\

\rowcolor{gray!6}
ASIF
& 10.35 (0.57) & 29.10 (2.68) & 42.45 (3.24)
& 9.55 (0.82) & 29.80 (2.17) & 43.05 (1.56) \\

CSA
& 0.90 (0.45) & 3.15 (1.28) & 5.60 (1.27)
& 0.95 (0.65) & 3.55 (0.84) & 6.10 (0.96) \\

\rowcolor{gray!6}
SUE
& 7.80 (1.51) & 24.50 (1.71) & 37.50 (0.85)
& 7.35 (0.70) & 26.80 (1.00) & 38.05 (1.08) \\

SOTAlign
& 35.60 (2.05) & 65.00 (1.92) & 76.20 (3.29)
& 35.20 (2.31) & 65.10 (2.06) & 75.55 (2.27) \\

\midrule
\rowcolor{gray!6}
\textbf{Ours}
& \underline{\textbf{36.55} (1.66)}
& \underline{\textbf{67.90} (2.96)}
& \underline{\textbf{79.25} (2.47)}
& \underline{\textbf{36.05} (3.99)}
& \underline{\textbf{68.10} (2.23)}
& \underline{\textbf{79.40} (2.10)} \\
\bottomrule
\end{tabular}
\end{table}

\begin{table}[h!]
\caption{Complement of Table \ref{tab:e1} with std over 5 seeds for MS-COCO.}
\label{tab:e1_coco}
\centering
\footnotesize
\setlength{\tabcolsep}{4.5pt}
\begin{tabular}{lcccccc}
\toprule
& \multicolumn{3}{c}{\textbf{\textit{Image $\rightarrow$ Text}}}
& \multicolumn{3}{c}{\textbf{\textit{Text $\rightarrow$ Image}}} \\
\cmidrule(lr){2-4}\cmidrule(lr){5-7}
\rowcolor{gray!6}
Method & R@1 & R@5 & R@10 & R@1 & R@5 & R@10 \\
\midrule
Contrastive
& 14.60 (0.58) & 40.65 (2.41) & 54.95 (1.45)
& 15.00 (1.43) & 40.55 (2.69) & 54.30 (2.64) \\

\rowcolor{gray!6}
ASIF
& 15.15 (1.04) & 48.65 (2.93) & 69.95 (2.42)
& 14.20 (2.28) & 48.80 (2.26) & 69.70 (2.00) \\

CSA
& 1.30 (0.73) & 5.55 (1.47) & 9.40 (1.39)
& 1.35 (1.01) & 5.70 (1.68) & 8.95 (1.91) \\

\rowcolor{gray!6}
SUE
& 9.90 (0.72) & 36.60 (1.34) & 55.80 (3.95)
& 9.10 (1.07) & 37.25 (1.56) & 56.35 (2.31) \\

SOTAlign
& \underline{\textbf{30.50} (1.93)}
& 70.10 (1.86)
& 84.50 (2.46)
& \underline{\textbf{30.90} (1.19)}
& 69.70 (2.78)
& 84.50 (2.08) \\

\midrule
\rowcolor{gray!6}
\textbf{Ours}
& 29.05 (1.32)
& \underline{\textbf{71.05} (1.16)}
& \underline{\textbf{85.35} (2.08)}
& 30.30 (1.27)
& \underline{\textbf{70.95} (2.09)}
& \underline{\textbf{86.45} (1.60)} \\
\bottomrule
\end{tabular}
\end{table}

\begin{table}[h!]
\caption{Complement of Table \ref{tab:e2} with std over 5 seeds for ChEBI-20.}
\label{tab:e1_molecule}
\centering
\footnotesize
\setlength{\tabcolsep}{4.5pt}
\begin{tabular}{lcccccc}
\toprule
& \multicolumn{3}{c}{\textbf{\textit{Molecule $\rightarrow$ Text}}}
& \multicolumn{3}{c}{\textbf{\textit{Text $\rightarrow$ Molecule}}} \\
\cmidrule(lr){2-4}\cmidrule(lr){5-7}
\rowcolor{gray!6}
Method & R@1 & R@5 & R@10 & R@1 & R@5 & R@10 \\
\midrule

Contrastive
& 16.10 & 37.90 & 48.20
& 15.10 & 36.15 & 47.05 \\

\rowcolor{gray!6}
ASIF
& 8.40 (0.91) & 28.20 (1.52) & 40.90 (2.28)
& 14.55 (0.60) & 35.85 (1.46) & 49.40 (0.91) \\

CSA
& 0.20 (0.20) & 1.10 (0.84) & 2.55 (0.51)
& 0.65 (0.52) & 2.45 (1.08) & 4.40 (1.52) \\

\rowcolor{gray!6}
SUE
& 3.25 (0.77) & 14.85 (2.71) & 25.05 (3.83)
& 3.35 (1.01) & 15.10 (1.65) & 26.10 (3.52) \\

SOTAlign
& 20.30 (1.02) & 47.15 (2.33) & 60.60 (0.76)
& 20.30 (1.43) & 47.80 (2.03) & 62.20 (0.93) \\

\midrule
\rowcolor{gray!6}
\textbf{Ours}
& \underline{\textbf{22.10} (1.27)}
& \underline{\textbf{49.75} (1.68)}
& \underline{\textbf{63.25} (2.09)}
& \underline{\textbf{22.05} (0.92)}
& \underline{\textbf{51.00} (2.35)}
& \underline{\textbf{63.70} (1.81)} \\
\bottomrule
\end{tabular}
\end{table}

\begin{table}[h!]
\centering
\caption{Training wall-clock time (seconds) on Flickr30k, averaged over 3 seeds. We vary either the number of anchors $N$ while fixing $M=2000$, or the number of unpaired points $M$ while fixing $N=500$. Speedup is measured relative to SOTAlign.}
\label{tab:runtime}
\begin{tabular}{llrrr}
\toprule
Setting & Value & Ours & SOTAlign & Speedup \\
\midrule
\multicolumn{5}{l}{\textit{Varying number of anchors ($M=2000$)}} \\
& $N=100$  & 5.9 $\pm$ 3.4 & 77.1 $\pm$ 3.5 & 13.07$\times$ \\
& $N=500$  & 5.0 $\pm$ 0.4 & 75.0 $\pm$ 0.2 & 15.00$\times$ \\
& $N=1000$ & 7.9 $\pm$ 0.5 & 76.0 $\pm$ 0.2 & 9.62$\times$ \\
& $N=5000$ & 29.2 $\pm$ 6.8 & 99.6 $\pm$ 0.3 & 3.41$\times$ \\
\midrule
\multicolumn{5}{l}{\textit{Varying number of unpaired points ($N=500$)}} \\
& $M=2000$  & 11.05 $\pm$ 7.12 & 80.26 $\pm$ 7.76 & 7.26$\times$ \\
& $M=5000$  & 21.93 $\pm$ 1.33 & 305.99 $\pm$ 0.75 & 13.95$\times$ \\
& $M=15000$ & 180.0 $\pm$ 23.67 & 2305.78 $\pm$ 4.06 & 12.81$\times$ \\
\bottomrule
\end{tabular}
\end{table}

\section{Hyperparameters}
\label{app:hyperparam}

Beyond the anchor budget $N$ and the pool size, our method involves
three groups of hyperparameters: standard encoder-training
hyperparameters, the entropic regularization of the Sinkhorn coupling,
and the anchor-relative profile itself. The first
group is fixed across datasets and seeds and summarized in
Table~\ref{tab:hyperparams-training}; only some of the Sinkhorn and profile hyperparameters depend on the seed and dataset and are selected with the held-out procedure described below.

\begin{table}[h]
\centering
\caption{Training hyperparameters, fixed across datasets
and seeds.}
\label{tab:hyperparams-training}
\begin{tabular}{lc}
\toprule
Hyperparameter & Value \\
\midrule
Projection dimension            & 1024 \\
Optimizer                       & Adam \\
Learning rate                   & $10^{-3}$ \\
Weight decay                    & $10^{-2}$ \\
Warmup steps                    & 200 \\
Epochs                          & 200 \\
Temperature initialization      & $0.07$ \\
Chunk size                      & 2000 \\
Sinkhorn max iterations         & 1000 \\
Sinkhorn tolerance              & 0.0001 \\
\bottomrule
\end{tabular}
\end{table}

\paragraph{Sinkhorn regularization.} The Sinkhorn coupling built on the
anchor-relative profile is regularized by $\varepsilon = \eta \cdot
\mathrm{median}(C)$, where $C$ is the anchor-relative cost matrix and
$\eta$ a single scalar multiplier, selected per dataset and seed
(Table~\ref{tab:hyperparams-selected}). Because $\mathrm{median}(C)$ is
recomputed from data every time, $\eta$ alone suffices to adapt
$\varepsilon$ to the scale of $C$ for every dataset and every candidate
profile -- $\varepsilon$ itself is never tuned directly.

\paragraph{Profile hyperparameters.} The anchor-relative profile is
built from $\gamma$-interpolated orthonormalized embeddings: each
modality is whitened independently before a point is compared to each
anchor by cosine similarity, with $\gamma \in [0,1]$ interpolating
between the untouched profile ($\gamma=0$) and an unmodified spectrum ($\gamma=1$). Together with $\eta$ above, $\gamma$ is the
only other hyperparameter we select per dataset and seed
(Table~\ref{tab:hyperparams-selected}); every other aspect of the
profile construction is fixed across datasets.

\paragraph{Selecting the profile hyperparameters.} For every seed, we split the $N$ anchors $90/10$: the larger split defines the anchor-relative
representation, while the OT coupling is solved, for each candidate
$(\eta, \gamma)$ on a small grid, over the unpaired pool together with
the held-out $10\%$ of anchors, with everything else about the profile
fixed, so that only $(\eta, \gamma)$ vary between candidates. For every
held-out anchor we record the rank of its true match in the resulting
coupling, in both directions, and keep the candidate minimizing the
mean held-out rank (Fig.~\ref{fig:hyperparam}). With
$(\eta, \gamma)$ fixed, we use this same configuration for the
corresponding distillation run, over the full set of $N$ anchors.
Table~\ref{tab:hyperparams-selected} reports the resulting choice per
dataset for one seed (results are very consistent across different seeds of the same dataset but may vary).

\begin{figure}[h]
    \centering
    \includegraphics[width=0.99\linewidth]{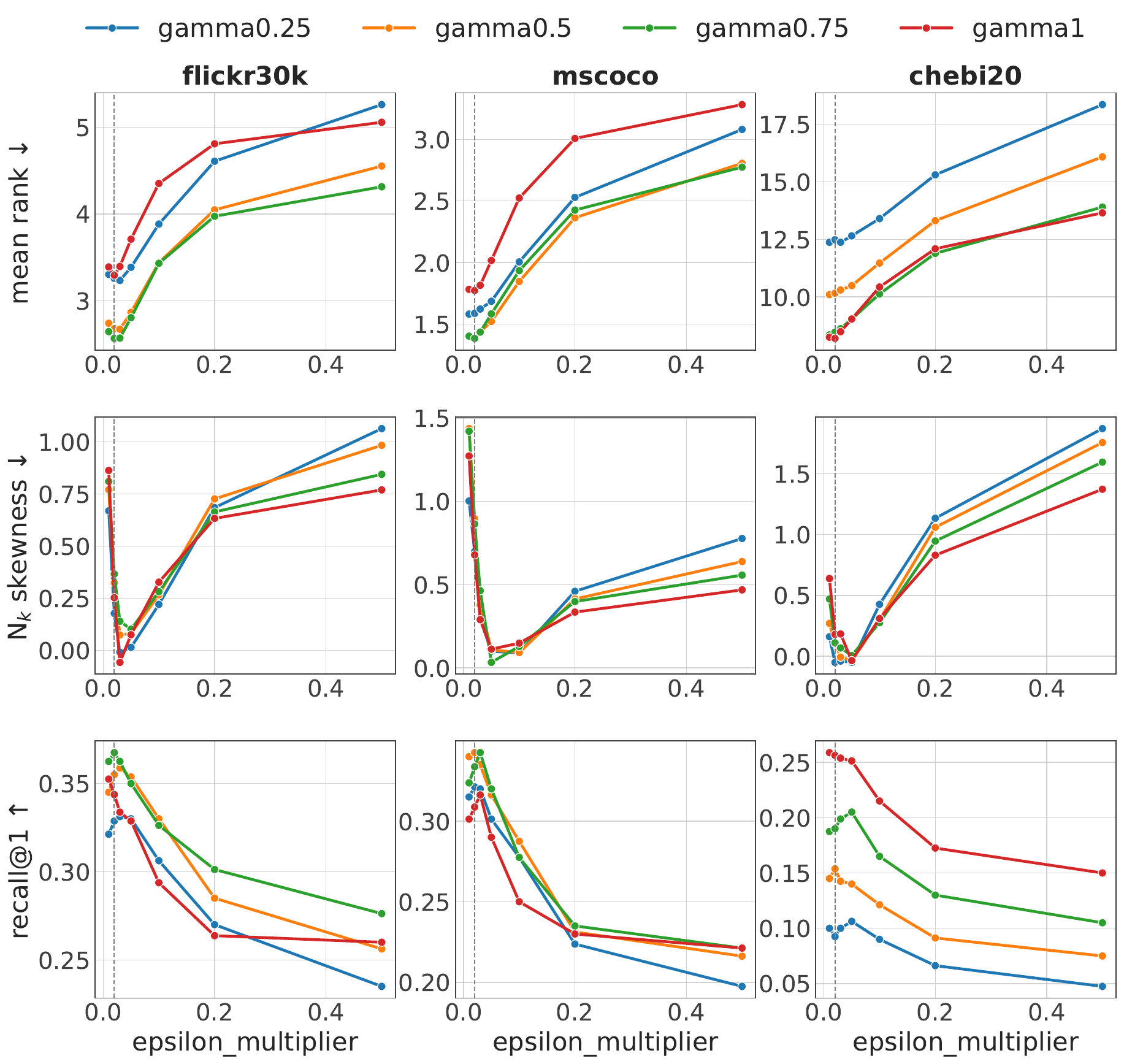}
    \caption{Held-out rank (mean, both retrieval directions), $N_{10}$ skewness and recall@1 as a function of the epsilon multiplier $\eta$, one curve per $\gamma$ value. The selected $(\gamma^\star, \eta^\star)$ minimizes this
    held-out rank.}
    \label{fig:hyperparam}
\end{figure}

\begin{table}[b!]
\centering
\caption{Selected profile hyperparameters per dataset (seed 0)}
\label{tab:hyperparams-selected}
\begin{tabular}{lccc}
\toprule
Dataset & $\gamma^\star$ & $\eta^\star$ \\
\midrule
Flickr30k & 0.75 & 0.02 \\
MS-COCO   & 0.75 & 0.02 \\
ChEBI-20  & 1.00 & 0.02 \\
\bottomrule
\end{tabular}
\end{table}

\end{document}